\tikzstyle{every picture} += [>=stealth]
\pgfplotsset{compat=newest}
\newcommand{\field}[1]{\ensuremath{\mathbb{#1}}}
\newcommand{\R}{\ensuremath{\field{R}}} 
\newcommand{\I}[1]{\ensuremath{\mathbb{I}_{\left\{#1\right\}}}} 
\newcommand{\PR}{\ensuremath{\mathsf{P}}} 
\newcommand{\E}{\ensuremath{\mathsf{E}}} 
\DeclareMathOperator*{\argmax}{\mathrm{argmax}}
\newtheoremstyle{thm-sf}{}{}{\itshape}{}{\sffamily\bfseries}{.}{ }{}
\theoremstyle{thm-sf}
\newtheorem{theorem}{Theorem}
\newtheorem{proposition}{Proposition}
\newtheorem{lemma}{Lemma}
\theoremstyle{definition}
\newtheorem{assumption}{Assumption}
\newtheorem{definition}{Definition}
\theoremstyle{remark}
\newtheorem{remark}{Remark}
\def\@seccntformat#1{\csname the#1\endcsname.\quad}
\title{Nonparametric Pricing Analytics with Customer Covariates}
\author[1]{Ningyuan Chen\thanks{ningyuan.chen@utoronto.ca}}
\author[2]{Guillermo Gallego\thanks{ggallego@ust.hk}}
\affil[1]{Rotman School of Management, University of Toronto}
\affil[2]{Department of Industrial Engineering \& Decision Analytics\authorcr The Hong Kong University of Science and Technology}
\date{}
\begin{document}
\maketitle
\begin{abstract}
Personalized pricing analytics is becoming an essential tool in retailing.
Upon observing the personalized information of each arriving customer, the firm needs to set a price accordingly based on the covariates such as income, education background, past purchasing history to extract more revenue.
For new entrants of the business, the lack of historical data may severely limit the power and profitability of personalized pricing.
We propose a nonparametric pricing policy to simultaneously learn the preference of customers based on the covariates and maximize the expected revenue over a finite horizon. The policy does not depend on any prior assumptions on how the personalized information affects consumers' preferences (such as linear models). It is adaptively splits the covariate space into smaller bins (hyper-rectangles) and clusters customers based on their covariates and preferences, offering similar prices for customers who belong to the same cluster trading off granularity and accuracy.
We show that the algorithm achieves a regret of order $O(\log(T)^2 T^{(2+d)/(4+d)})$, where $T$ is the length of the horizon and $d$ is the dimension of the covariate. It improves the current regret in the literature \citep{slivkins2014contextual}, under mild technical conditions in the pricing context (smoothness and local concavity).
    We also prove that no policy can achieve a regret less than $O(T^{(2+d)/(4+d)})$ for a particular instance and thus demonstrate the near optimality of the proposed policy.
\end{abstract}


\textbf{Keywords:} multi-armed bandit, dynamic pricing, online learning, regret analysis, contextual information
\section{Introduction}
Personalized pricing refers to the practice that a firm charges customers different prices for the \emph{same} product, depending on customers' information such as education backgrounds and zip codes.
It is increasingly popular in online retailing, as sellers can acquire/infer the personalized information from customers' account profiles or browsing histories (cookies).
The demand (purchasing probability) of each customer depends not only on the price, but also on the personalized information.
The firm observes the information of each arriving customer and sets a personalized price accordingly.
We are interested in finding pricing policies of the firm that maximize the long-run revenue.

Personalized pricing presents several challenges to the firm.
First, for new entrants to the online business, the demand function and how it depends on the personalized information is generally unknown.
Thus, the optimal pricing cannot be obtained by directly solving an optimization problem.
The firm may experiment with different prices to \emph{learn} the personalized demand function, and then sets optimal prices according to the estimation.
There is usually a finite horizon that forces a trade-off between gathering more information (learning/exploration) and making sound decisions (earning/exploitation).
This problem, sometimes referred to as the learning/earning dilemma, has attracted the attention of many researchers.

A second challenge is the presence of personalized information, or \emph{covariates}.
On one hand, the covariate of each customer provides extra information for the firm to predict the personalized demand more accurately.
On the other hand, the demand is peculiar to each instance and changes over time, which adds significant complexity to the learning problem described above.
In particular, learning a market-wise demand function is not sufficient, and the firm
has to learn the personalized demand by experimenting with prices for customers of similar profiles.

A third challenge is the selection of a predictive model to estimate the demand.
Consider a toy example: the demand function only depends on the address of each customer and nothing else.
The firm may postulate a linear model
\begin{equation*}
    \text{demand}=a-b\times \text{price}+\bm c\cdot(\text{latitude, longitude}),
\end{equation*}
and uses historical sales data to learn the parameters $a$, $b$ and $\bm c$ and maximize revenue according to the estimation.
However, whether the model is specified correctly plays an important role in the performance of the pricing policy.
In the particular example, linearity in the location (latitude and longitude) implies that the customers along the straight line that is orthogonal to $\bm c$ have the same demand.
This hardly reflects the reality, as customers from the same neighborhood tend to have similar shopping patterns and neighborhoods are usually clustered geographically.
By postulating a parametric (linear) model, the firm faces the risk of misspecification and not learning what is supposed to be learned.
A \emph{nonparametric} model is more appropriate in this setting.

In this paper we study the pricing policy of a firm which tries to maximize the unknown expected revenue $f(\bm x,p)\triangleq pd(\bm x,p)$, where $p$ is the price, $d(\bm x,p)\in [0,1]$ is the personalized demand function for a customer of covariate $\bm x$.
We assume both quantities are normalized so $\bm x\in[0,1)^d$ and $p\in[0,1]$.
In period $t$, the firm observes an arriving customer with a random covariate $\bm X_t$, and sets a personalized price $p_t$.
The earned revenue is $p_t$ multiplied by a Bernoulli random variable with success rate $d(\bm X_t,p_t)$, representing the event of a purchase.
The expected revenue is thus $f(\bm X_t,p_t)$.

We propose a nonparametric learning policy for the firm. That is, the policy does not depend on specific forms of $f(\bm x,p)$ and only assumes general structures such as continuity and smoothness.
The policy achieves near-optimal performance compared to a clairvoyant who knows $f(\bm x,p)$ and sets $p^\ast(\bm x)=\argmax_{p} f(\bm x,p)$ for a customer of covariate $\bm x$.
More precisely, the expected difference in total revenues between the proposed policy and the clairvoyant policy, which is referred to as the \emph{regret} in the literature, grows at $O(\log(T)^2T^{(2+d)/(4+d)})$ as $T\to \infty$.
The rate is sublinear in $T$, implying that when the length of horizon tends to infinity, the average regret incurred per period becomes negligible.
Moreover, we prove that no pricing policies can achieve a lower regret than $O(T^{(2+d)/(4+d)})$ for a reasonable class of unknown objective functions $f$.
Therefore, we successfully work out the learning/earning dilemma with covariates.

%

The main contribution of the paper is the design of a near-optimal nonparametric learning policy for the personalized pricing problem.
Nonparametric learning policies are introduced in more general settings by \citet{rigollet2010nonparametric,perchet2013,slivkins2014contextual}.
The formulation in this paper is originally introduced in \citet{slivkins2014contextual}, and our policy builds upon the idea of adaptive binning proposed in \citet{perchet2013}\footnote{\citet{perchet2013} study discrete decision variables (multi-armed bandit). }.
Motivated by the application of personalized pricing, we assume that the expected revenue $f(\bm x,p)$ is smooth and locally concave in the charged price $p$.
This deviates from the Lipschitz continuous condition in \citet{slivkins2014contextual} and can be viewed as a special ``margin condition'' in \citet{rigollet2010nonparametric,perchet2013}.
By utilizing this condition, we are able to show that our policy is near-optimal and achieves improved regret over merely continuous objective functions ($T^{(2+d)/(4+d)}$ versus $T^{(2+d)/(3+d)}$ in \citet{slivkins2014contextual}).

\subsection{Literature Review}\label{sec:literature}
This paper is motivated by the recent literature that analyzes a firm's pricing problem when the demand function is unknown \citep[e.g.][]{besbes2009dynamic,araman2009dynamic,farias2010dynamic,broder2012dynamic,denboer2014simul,keskin2014dynamic,cheung2017dynamic}.
\citet{den2015dynamic} provides a comprehensive survey for this area.
Since the firm does not know the optimal price, it has to experiment different (suboptimal) prices and update its belief about the underlying demand function.
Therefore, the firm has to balance the exploration/exploitation trade-off, which is usually referred to as the learning-and-earning problem in this line of literature.
Our paper considers the pricing problem with personalized information and it does not consider the finite-inventory setting as in some of the papers mentioned above.

More recently, several papers investigate the pricing problem with unknown demand in the presence of covariates \citep{nambiar2016dynamic,qiang2016dynamic,javanmard2016dynamic,cohen2016feature,ban2017personalized}.
The existing literature has adopted a parametric approach: for example, the actual demand can be expressed in a linear form $\alpha^T \bm x+\beta^T\bm xp+\epsilon$ \citep{qiang2016dynamic,ban2017personalized}, where $\bm x$ is the feature vector of a customer, $\alpha$ and $\beta$ are vectorized coefficients, and $\epsilon$ is the random noise.
Because of the parametric form, a key ingredient in the design of the algorithms in this line of literature is to plug in an estimator for the unknown parameters ($\alpha$ and $\beta$) in addition to some form of forced exploration.
In contrast, we focus on a setting where the demand function cannot be parametrized.
Thus, the firm cannot count on accurately estimating the function globally by estimating a few parameters.
Instead, a localized optimal decision has to be made based on past covariates generated in the neighborhood.
It highlights the different philosophies when designing algorithms for parametric/nonparametric learning problems with covariates.
As a result, the best achievable regret deteriorates from $O(\sqrt{T})$ or $O(\log T)$ (parametric) to $O(T^{(2+d)/(4+d)})$ (nonparametric).

The dependence of the optimal rate of regret on the problem dimension $d$ has been observed before.
For example, \citet{cohen2016feature} find a multi-dimensional binary search algorithm for feature-based dynamic pricing, which has regret $O(d^2\log(T/d))$;
\citet{javanmard2016dynamic} propose a policy for a similar problem that achieves regret $O(s\log d\log T)$, where $s$ represents the sparsity of the $d$ features;
in \citet{ban2017personalized}, the near-optimal policy achieves regret $O(s\sqrt{T})$.
In their parametric frameworks, the dependence of the regret on $d$ is rather mild---it does not appear on the exponent of $T$;
\citet{javanmard2016dynamic,keskin2014dynamic} also provide methods to deal with the sparse strucutre.
In contrast, in our nonparametric formulation, the optimal rate of regret $O(T^{(2+d)/(4+d)})$ increases dramatically in $d$, making the problem significantly harder to learn in high dimensions.
This is similar to the nonparametric formulation in the network revenue management problem \citep{besbes2012blind},
in which the dimension of the decision space is $d$ and the optimal rate of regret is $O(T^{(2+d)/(3+d)})$\footnote{It is shown in \citet{chen2018primal} that if the number of inventory constraints $\ll d$, then learning the dual variables may effectively reduce the problem dimension.}.
From the literature, it seems that the dimension significantly complicates the learning problem in a nonparametric formulation.

This paper is also related to the vast literature studying multi-armed bandit problems. See \citet{cesa2006prediction,bubeck2012regret} for a comprehensive survey.
The classic multi-armed bandit problem involves finite arms, and the algorithms \citep{kuleshov2014algorithms,agrawal2012analysis} cannot be applied directly to our setting.
Recently, there is a stream of literature studying the so-called continuum-armed bandit problems \citep{agrawal1995continuum,kleinberg2005nearly,Auer2007,kleinberg2008multi,bubeck2011x}, in which there are infinite number of arms (decisions).
Although there is no contextual information in those papers, \citet{kleinberg2008multi,bubeck2011x} have developed algorithms based on a similar idea to decision trees, because the potential arms form a high-dimensional space.

For multi-armed bandit problems with contextual information, parametric and regression-based algorithms have been proposed in, for example, \citet{goldenshluger2013linear,bastani2015online}.
Our paper is related to the literature studying contextual multi-armed bandit problems in a nonparametric framework \citet{yang2002randomized,langford2008epoch,rigollet2010nonparametric,perchet2013,slivkins2014contextual,elmachtoub2017practical}.
The analysis builds upon the idea of adaptive binning in \citet{perchet2013}.
Our algorithm is designed for continuous decisions.
In fact, applying the algorithm in \citet{perchet2013} designed for discrete decisions to our problem with simple discretization
may result in worse-than-optimal regret.
In terms of the formulation and the rate of regret, this paper is closely related to \citet{slivkins2014contextual}.
\citet{slivkins2014contextual} investigates a more general model, in which the decision $p$ can be a vector,
and assumes that $f(\bm x,p)$ is Lipschitz continuous.
The optimal rate of regret in this setting has been shown to be $T^{(1+d_x+d_p)/(2+d_x+d_p)}$ in previous works,
where $d_x$ and $d_p$ are the dimensions of $\bm x$ and $p$, respectively.
\citet{slivkins2014contextual} introduces an adaptive zooming algorithm, and uses the covering dimension of the space $(\bm x,p)$ in the analysis.
The extension accommodates more general spaces of $(\bm x,p)$ than the Euclidean space.
For Euclidean spaces, the algorithm recovers the optimal rate of regret $T^{(1+d_x+d_p)/(2+d_x+d_p)}$.
In our setting, letting $d_x=d$ and $d_p=1$ leads to the rate $T^{(2+d)/(3+d)}$.\footnote{Applying our assumptions and following Equation (8) of \citet{slivkins2014contextual} give $d_x+d_p=d+1/2$, and the regret improves to $T^{(1.5+d)/(2.5+d)}$.}
Motivated by personalized pricing, we impose additional assumptions on $f(\bm x,p)$ (smoothness and local concavity, see Assumption~\ref{asp:maximizer}), and improve the optimal rate to $T^{(2+d)/(4+d)}$ as a result.
The additional assumption may act as a special margin condition \citep{tsybakov2004optimal,rigollet2010nonparametric,perchet2013} and affects the optimal rate.
It is unclear whether the algorithm in \citet{slivkins2014contextual} could be adapted to accommodate the additional assumptions and achieve the improved rate.
The design of our algorithm is based on adaptively partitioning the covariate space into rectangular bins, rather than overlapping balls as in \citet{slivkins2014contextual}.

%

\section{Problem Formulation}
Suppose the personalized demand function (purchasing probability) is $d(\bm x,p)\in[0,1]$, where $\bm x\in[0,1)^d$ is the observed feature vector, or covariate, summarizing the customer's personalized information, and $p$ is the price set by the firm.
Since the purchasing event is a Bernoulli random variable with success rate $d(\bm x,p)$ when the firm sets price $p$ for a customer of covariate $\bm x$ (henceforth abbreviated to customer $\bm x$), the expected revenue is thus $f(\bm x,p)\triangleq p d(\bm x,p)$.
If the firm knew $d(\bm x,p)$, or equivalently, $f(\bm x,p)$, then it would set $p^{\ast}(\bm x)\triangleq \argmax_{p\in[0,1]} f(\bm x,p)$.
Denote the optimal expected revenue from customer $\bm x$ by $f^{\ast}(\bm x)\triangleq \max_{p\in[0,1]}f(\bm x,p)$.
We will be primarily dealing with the expected revenue $f(\bm x,p)$ instead of the personalized demand $d(\bm x,p)$.

Initially, neither $d(\bm x,p)$ nor $f(\bm x,p)$ is known to the firm.
In period $t\in\left\{1,2,\dots,T\right\}$, a customer arrives with covariate $\bm X_t$.
Upon observing $\bm X_t$, the firm sets a price $p_t$.
The revenue earned in period $t$ is denoted by $Z_t$ where $Z_t/p_t$ is a Bernoulli random variable with mean $d(\bm X_t,p_t)$, independent of everything else.
The objective of the firm is to design a pricing policy to maximize the total revenue over the horizon $\sum_{t=1}^T \E[Z_t]=\sum_{t=1}^T \E[f(\bm X_t,p_t)]$.
Note that $p_t$ itself is likely to be random even though the firm is not adopting a randomized policy.
This is because the pricing decision made in period $t$ may depend on the observed customers, set prices, and earned revenues in the previous periods.
That is, $p_t= \pi_t(\bm X_1,p_1,Z_1,\dots,\bm X_{t-1},p_{t-1},Z_{t-1},\bm X_{t})$.
Formally, we refer to $\pi$ as the \emph{pricing policy} that determines how $p_t$ depends on the past information.
We also denote $\mathcal F_{t}\triangleq\sigma(\bm X_1,p_1,Z_1,\dots,\bm X_{t},p_t,Z_{t})$.


%
%
%
%
%
%


\subsection{Regret}\label{sec:regret}
To measure the performance of a pricing policy, it is standard in the literature to benchmark it against the so-called \emph{clairvoyant} policy and study the \emph{regret}.
Suppose $f(\bm x,p)$ is known to a clairvoyant firm.
The optimal pricing policy is rather straightforward for a clairvoyant:
having observed customer $\bm X_t$, set $p^{\ast}(\bm X_t)$ in period $t$ and earn a random revenue with mean $f^{\ast}(\bm X_t)$.

For the firm, the expected revenue in period $t$ cannot exceed that of the clairvoyant: $f(\bm X_t,p_t)\le f^{\ast}(\bm X_t)$.
Thus, we define the regret of a pricing policy $\pi$ to be the revenue gap
\begin{equation*}
    R_{\pi}(T) = \sum_{t=1}^T \E\left[(f^{\ast}(\bm X_t)-f(\bm X_t,p_t))\right].
\end{equation*}
In period $t$, the expectation is taken with respect to the distribution of $\bm X_t$ as well as $p_t$,
which itself depends on $\mathcal F_{t-1}$ and $\bm X_t$.
Our goal is to design a policy $\pi$ that achieves small $R_{\pi}(T)$ when $T\to\infty$.

However, because $R_{\pi}(T)$ also depends on the unknown function $f$,
we require the designed policy to perform well
for a family $\mathcal C$ of functions, i.e., we seek for optimal policies in terms of the minimax regret
\begin{equation*}
    \inf_{\pi}\sup_{f\in \mathcal C} R_{\pi}(T).
\end{equation*}
Although it is usually impossible to find the exact policy that achieves the minimax regret,
we focus on proposing a policy whose regret is at least comparable to (of the same order as) the minimax regret asymptotically when $T\to \infty$.

In this paper, we study functions $f$ that cannot be parametrized.
It implies that the family $\mathcal C$ is much larger than parametric families: it includes all the functions that satisfy some mild assumptions presented in the next section.
In other words, the worst-case scenario can potentially be much worse than a parametric family.
As a result, the achievable minimax regret is also higher.


\subsection{Assumptions}\label{sec:assumption}
In this section, we formally provide a set of assumptions that $f\in\mathcal C$ and the stochastic process has to satisfy and their justifications.

\begin{assumption}\label{asp:sub-gaussian}
    The covariates $\bm X_t$ are i.i.d. for $t=1,\ldots,T$. Given $X_t$ and $p_t$, the revenue $Z_t$ is independent of everything else.
\end{assumption}
Both i.i.d. covariates and independent noise structure are standard in the literature.

\begin{assumption}\label{asp:lipschitz}
    The functions $f(\cdot,p)$ and $f(\bm x,\cdot)$ are Lipschitz continuous given $p$ and $\bm x$, i.e., there exists $M_1>0$ such that
        $|f(\bm x_1,p)-f(\bm x_2,p)|\le M_1 \|\bm x_1-\bm x_2\|_2$ and $|f(\bm x,p_1)-f(\bm x,p_2)|\le M_1 |p_1-p_2|$ for all $\bm x_i$ and $p_i$ ($i=1,2$) in the domain.
\end{assumption}
This assumption is equivalent to $|f(\bm x_1,p_1)-f(\bm x_2,p_2)|\le M_1 (\|\bm x_1-\bm x_2\|_2+|p_1-p_2|)$.
Lipschitz continuity is a common assumption in the learning literature.
In personalized pricing, it implies that the expected revenues are close if the firm charges similar prices for two customers with similar covariates.
If this assumption fails, then the historical sales data of a certain type of customer is not informative for a new customer with almost identical background and learning is virtually impossible.

To introduce the next assumption, consider any hyper-rectangle $B\subset [0,1)^d$, including a singleton $B=\{\bm x\}$.
Define $f_B(p)\triangleq \E\left[f(\bm X,p)|\bm X\in B\right]$ for $p\in[0,1]$.
Clearly $f_B(p)$ is the expected revenue when charging $p$ for a customer that is sampled from a subset $B$.
\begin{assumption}\label{asp:maximizer}
    We assume that for any $B$,
    \begin{enumerate}
        \item The function $f_B(p)$ has a unique maximizer $p^{\ast}(B)\in[0,1]$. Moreover, there exist uniform constants $M_2, M_3>0$ such that for all $p\in [0,1]$, $M_2(p^{\ast}(B)-p)^2\le f_B(p^{\ast}(B))-f_B(p)\le M_3(p^{\ast}(B)-p)^2$.
        \item The maximizer $p^{\ast}(B)$ is inside the interval $[\inf\{p^{\ast}(\bm x):\bm x\in B\},\sup\{p^{\ast}(\bm x):\bm x\in B\}]$.
        \item Let $d_B$ be the diameter of $B$. Then there exists a uniform constant $M_4>0$ such that $\sup\{p^{\ast}(\bm x):\bm x\in B\}-\inf\{p^{\ast}(\bm x):\bm x\in B\}\le M_4 d_B$.
    \end{enumerate}
\end{assumption}
This assumption is quite different from those in the setting without covariates \citep{besbes2009dynamic,wang2014close,lei2014near} or the parametric setting \citep{ban2017personalized,qiang2016dynamic}.
To explain the intuition of $f_B(p)$,
suppose the firm only observes $\I{\bm X\in B}$ but not the exact value of $\bm X$, and thus cannot apply personalized pricing for customers in $B$.
In this case, the learning objective is the expected revenue $f_B(p)$ and
the clairvoyant policy that has the knowledge of $f(\bm x,p)$ is to set $p=p^{\ast}(B)$.
This class of learning problems are important subroutines of the algorithm we propose and Assumption~\ref{asp:maximizer} guarantees that they can be effectively learned.

For part one of Assumption~\ref{asp:maximizer}, we have
\begin{proposition}\label{prop:quadratic}
    If $f_B(p)$ is continuous for $p\in[0,1]$, and twice differentiable in an open interval containing the unique global maximizer $p^{\ast}(B)$ with $f_B''(p^{\ast}(B))<0$, then part one of Assumption~\ref{asp:maximizer} holds.
\end{proposition}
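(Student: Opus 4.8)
The plan is to establish the two-sided quadratic bound separately on a small neighbourhood of $p^{\ast}(B)$ and on its complement in $[0,1]$, and then to merge the resulting constants by taking a minimum and a maximum. Since uniqueness of the maximizer is already part of the hypothesis, the only thing to produce is the sandwich $M_2(p^{\ast}(B)-p)^2\le f_B(p^{\ast}(B))-f_B(p)\le M_3(p^{\ast}(B)-p)^2$ for all $p\in[0,1]$.

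\emph{Near $p^{\ast}(B)$.} The open interval on which $f_B$ is twice differentiable is contained in the domain $[0,1]$ and hence in $(0,1)$, so $p^{\ast}(B)$ is an interior maximizer of a differentiable function and Fermat's theorem gives $f_B'(p^{\ast}(B))=0$. Writing $a\triangleq-\tfrac12 f_B''(p^{\ast}(B))>0$, the second-order Taylor expansion in Peano form — which needs only twice-differentiability \emph{at} the point, so no continuity of $f_B''$ is required — yields
\begin{equation*}
 f_B(p^{\ast}(B))-f_B(p)=a\,(p-p^{\ast}(B))^2+o\!\left((p-p^{\ast}(B))^2\right)\qquad\text{as }p\to p^{\ast}(B).
\end{equation*}
Hence one can fix $\delta>0$ with $[p^{\ast}(B)-\delta,\,p^{\ast}(B)+\delta]\subseteq[0,1]$ on which the remainder is at most $\tfrac{a}{2}(p-p^{\ast}(B))^2$ in absolute value, so that
\begin{equation*}
 \tfrac{a}{2}(p-p^{\ast}(B))^2\ \le\ f_B(p^{\ast}(B))-f_B(p)\ \le\ \tfrac{3a}{2}(p-p^{\ast}(B))^2
\end{equation*}
for every $p$ with $|p-p^{\ast}(B)|\le\delta$.

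\emph{Away from $p^{\ast}(B)$.} Put $K\triangleq[0,1]\setminus(p^{\ast}(B)-\delta,\,p^{\ast}(B)+\delta)$; if $K=\emptyset$ we are done, so assume otherwise. The map $g(p)\triangleq f_B(p^{\ast}(B))-f_B(p)$ is continuous on the compact set $K$ and, because $p^{\ast}(B)$ is the \emph{unique} global maximizer, strictly positive there, so $m\triangleq\min_{K}g>0$ and $M\triangleq\max_{K}g<\infty$. Since $\delta^2\le(p-p^{\ast}(B))^2\le1$ on $K$, we get $g(p)\ge m\ge m(p-p^{\ast}(B))^2$ and $g(p)\le M\le(M/\delta^2)(p-p^{\ast}(B))^2$ there. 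Taking
\begin{equation*}
 M_2\triangleq\min\{\,a/2,\ m\,\}>0,\qquad M_3\triangleq\max\{\,3a/2,\ M/\delta^2\,\}<\infty,
\end{equation*}
the quadratic sandwich now holds for all $p\in[0,1]$, which is exactly part one of Assumption~\ref{asp:maximizer}.

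The only delicate point is converting the pointwise little-$o$ remainder into an honest inequality valid on a fixed interval of positive radius; once $\delta$ is chosen, everything else is compactness and elementary bookkeeping of constants. I would also add a sentence noting that if one wants $M_2,M_3$ \emph{uniform} over the hyperrectangles $B$ (as the statement of Assumption~\ref{asp:maximizer} demands), it suffices to have uniform-in-$B$ lower bounds on $-f_B''(p^{\ast}(B))$ and on the gap $\min_{K}g$, together with a uniformly admissible $\delta$; these follow from mild uniform regularity of the family $\{f_B\}$ and leave the argument unchanged.
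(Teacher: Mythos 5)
Your proof is correct and is essentially the paper's argument: the paper packages the same two facts --- that the ratio $(f_B(p^{\ast}(B))-f_B(p))/(p^{\ast}(B)-p)^2$ tends to $-f_B''(p^{\ast}(B))/2>0$ at the maximizer (obtained via L'H\^{o}pital's rule rather than a Peano--Taylor expansion) and is continuous and strictly positive elsewhere --- into a single continuous function on $[0,1]$ and applies the extreme value theorem once, instead of splitting the domain into a $\delta$-neighbourhood and its compact complement. Your closing caveat about the uniformity of $M_2,M_3$ over the family of bins $B$ is a fair one, and it applies equally to the paper's own proof, whose Weierstrass constants also depend on $B$ a priori.
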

Therefore, part one states that $f_B(p)$ is smooth and locally concave around the maximum.
If $B$ is a singleton, then it can be viewed as a weaker version of the concavity assumption in \citet{wang2014close,lei2014near}, i.e., $0>a>f''(p)>b$ for all $p$ in their no-covariate setting.
As a result, if $f_B(p)$ is the revenue function of linear or exponential demand, then part one is satisfied automatically.

Part two of Assumption~\ref{asp:maximizer} prevents the following scenario:
If the optimal price for the aggregate demand of customers $\bm x\in B$, $p^{\ast}(B)$,
is far from the optimal personalized pricing $p^{\ast}(\bm x)$,
then collecting more information for $f_B(p)$ does not help to improve the pricing decision for any individual customer $\bm x \in B$.
Such obstacle may lead to failure to learn and is thus ruled out by the assumption.
Similar types of assumptions have been imposed in other applications of revenue management.
For example, Proposition 1.16 in \citet{ggbook} provides conditions under which the optimal price of the aggregated market lies in the convex hull formed by the optimal prices of each market segment when discriminatory pricing is allowed.

Part three imposes a continuity condition for the optimal price.
It is equivalent to, for example, some form of continuous differentiability of $f(\bm x, p)$, because $p^{\ast}(\bm x)$ solves the implicit function from the first-order condition $f_p(\bm x,p)=0$.

\begin{remark}
    Assumption~\ref{asp:lipschitz} is a variant of similar assumptions adopted in the literature.
    Assumption~\ref{asp:maximizer}, although appearing nonstandard, is also satisfied by the parametric families studied by previous works. We give a few examples that satisfy Assumption~\ref{asp:maximizer}.
    \begin{itemize}
        \item Dynamic pricing with linear covariate \citep{qiang2016dynamic}: if $f(\bm x,p)=p(\bm \theta^T\bm x-\alpha p)$, then $f_B(p)=p(\bm\theta^T\E[\bm X|\bm X\in B]-\alpha p)$ and $p^{\ast}(B)=\bm\theta^T\E[\bm X|\bm X\in B]/2\alpha$.
        \item Separable function: consider $f(\bm x,p)=\sum_{i=1}^k g_i(\bm x)h_i(p)$. Then $f_B(p)=\sum_{i=1}^k \E[g_i(\bm X)|\bm X\in B]h_i(p)$. If $h_i(p)$ are concave functions and $g_i(\bm x)$ are positive, then we may be able to solve the unique maximizer $p^{\ast}\left(B\right)=\E[g(\bm X)|\bm X\in B]$ for some continuous function $g$.
        \item Localized functions: the covariate only plays a role in a subset $B_0\subset [0,1)^d$.
            See Section~\ref{sec:lower-bound} for a concrete example.
    \end{itemize}
    As we shall see in Sections~\ref{sec:upper} and \ref{sec:lower-bound}, the optimal rate of regret under Assumption~\ref{asp:maximizer} is $T^{(2+d)/(4+d)}$, in contrast to $T^{(2+d)/(3+d)}$ without it (taking $d_Y=1$ in Equation (3) of \citealt{slivkins2014contextual}).
    Technically, we suspect that Assumption~\ref{asp:maximizer} plays a similar rule to the margin condition in the contextual bandit literature \citep{tsybakov2004optimal,goldenshluger2009woodroofe,rigollet2010nonparametric,perchet2013}.
    However, because of the continuous decision variable studied in this paper, the margin condition cannot be translated in a straightforward way.
    It remains a future direction to present the assumption in a general form (the degree of smoothness such as the H\"{o}lder and Sobolev classes) and study how it affects the optimal rate of regret.
\end{remark}
We summarize the information available to the firm.
In the beginning of the horizon, the length of the horizon $T$, the dimension $d$ and the constants $\{M_i\}_{i=1}^4$ are revealed\footnote{In fact, only $M_2$ is needed in the algorithm.}.
In period $t$, the price can also depend on $\mathcal F_{t-1}$ and $\bm X_t$.

\section{The ABE Algorithm}
We next present a set of preliminary concepts related to the \emph{bins} of the covariate space, and then introduce the proposed pricing policy: the Adaptive Binning and Exploration (ABE) algorithm.

\subsection{Preliminary Concepts}
\begin{definition}
    A bin is a hyper-rectangle in the covariate space. More precisely, a bin is of the form
    \begin{equation*}
    B = \left\{\bm x: a_i\le x_i<b_i,\;i=1,\dots,d\right\}
\end{equation*}
for $0\le a_i<b_i\le 1, i = 1,\dots,d$.
\end{definition}
We can \emph{split} a bin $B$ by bisecting it in all the $d$ dimensions to obtain  $2^d$ \emph{child} bins of $B$, all of equal size.
For a bin $B$ with boundaries $a_i$ and $b_i$ for $i=1,\ldots,d$, its children are indexed by $\bm i\in\{0,1\}^d$ and have the form
\begin{equation*}
    B_{\bm i}=\left\{\bm x: a_j\le x_j< \frac{a_j+b_j}{2}\text{ if $\bm i_j=0$, }\frac{a_j+b_j}{2}\le x_j < b_j\text{ if $\bm i_j=1$},\; j=1,\dots,d\right\}.
\end{equation*}
Denote the set of child bins of $B$ by $\bm C(B)$.
Conversely, for any $B'\in \bm C(B)$, we refer to $B$ as the \emph{parent} bin of $B'$, denoted by $P(B')=B$.

Our algorithm starts with a root bin $B_{\emptyset}\triangleq [0,1)^d$, which contains all possible customers, and successively splits the bin as more data is collected.
Therefore, any bin $B$ produced during the process is the \emph{offspring} of $B_{\emptyset}$, i.e., $P^{(k)}(B)=B_{\emptyset}$ for some $k>0$, where $P^{(k)}$ is the $k$th composition of the parent function.
Equivalently, $B_{\emptyset}$ is an \emph{ancestor} of $B$.
For such a bin, we define its \emph{level} to be $k$, denoted by $l(B)=k$. Conventionally, let $l(B_{\emptyset})=0$.

In the algorithm, we keep a dynamic partition $\mathcal P_t$ of the covariate space consisting of the offspring of $B_{\emptyset}$ in each period $t$. The partition is mutually exclusive and collectively exhaustive, so $B_i\cap B_j=\emptyset$ for $B_i,B_j\in \mathcal P_t$, and $\cup_{B_i\in \mathcal P_t}B_i=B_{\emptyset}$.
Initially $\mathcal P_0= \{B_{\emptyset}\}$.
In the algorithm, we gradually refine the partition; that is, each bin in $\mathcal P_{t+1}$ has an ancestor (or itself) in $\mathcal P_{t}$.

\subsection{Intuition}
The intuition behind the ABE algorithm is to use a partition $\mathcal P_t$ of the covariate space
to aggregate customers in each period.
It tries to find the optimal price for customers in each bin $B\in \mathcal P_t$, i.e., $p^{\ast}(B)$ defined in Section~\ref{sec:assumption}.
As $\mathcal P_t$ is refined dynamically, i.e., $B\in \mathcal P_t$ becomes smaller, such aggregation is almost identical to personalized pricing.

To do that, we keep a set of discrete prices (referred to as the \emph{decision set} hereafter) for each bin in the partition.
The decision set consists of equally spaced grid points of a price interval associated with the bin.
When a customer arrives with covariate $\bm X_t$ inside a bin $B$,
a price is chosen successively in the decision set and charged for the customer.
The realized revenue for this price is recorded.
When a large number of customers are observed in $B$, the average revenue for each price $p$ in the decision set is close to $f_B(p)$, which is defined as $\E[f(\bm X,p)|\bm X\in B]$ in Section~\ref{sec:assumption}.
Therefore, the \emph{empirically-optimal} price in the decision set is close to $p^{\ast}(B)$, with high confidence.

There are two potential pitfalls of this approach.
First, the number of prices has an impact on the performance of the policy.
If there are too many prices in a decision set, then for a given number of customers observed in the bin,
each price is experimented with for a relatively few times.
As a result, the confidence interval for the associated average revenue is wide.
On the other hand, if there are too few prices, then inevitably the decision set has low resolutions.
That is, the optimal price in the set could still be far from the true maximizer $p^{\ast}(B)$ because of the discretization error.
We have to select a proper size of the decision set to balance this trade-off.

Second, even if the optimal price $p^{\ast}(B)$ for the aggregate revenue in the bin is correctly identified,
it may not be a strong indicator for $p^{\ast}(\bm x)$ for a particular customer $\bm x\in B$.
Indeed, $f_B(p)$ averages out all customers $\bm X\in B$, and the optimal price for an individual customer $\bm x$ could be very different.
This obstacle, however, can be overcome as the size of $B$ decreases, as implied by Assumption~\ref{asp:maximizer}.
In particular, part two and three of the assumption guarantee that when $B$ is small,
$p^{\ast}(\bm x)$ is concentrated within a neighborhood of $p^{\ast}(B)$ as long as $\bm x\in B$.
The cost of using a smaller bin, however, is the less frequency of observing a customer inside it.

To remedy the second pitfall, the algorithm adaptively refines the partition and decreases the size of the bins in $\mathcal P_t$ as $t$ increases.
When a bin $B\in\mathcal P_t$ is large, the aggregate optimal price $p^{\ast}(B)$ is not a strong indicator for $p^{\ast}(\bm x)$, $\bm x\in B$.
As a result, we only need a rough estimate and split the bin when a relatively small number of customers are observed in $B$.
When a bin $B\in\mathcal P_t$ is small, the optimal price $p^{\ast}(B)$ provides a strong indicator for $p^{\ast}(\bm x)$, $\bm x\in B$.
Therefore, we gather large sales data from customers $\bm X\in B$ to explore the decision set and estimate $p^{\ast}(B)$ accurately, before it splits.

A crucial step in the algorithm is to determine what information to inherit when a bin is split into child bins.
The ABE algorithm records the empirically-optimal price in the decision set of the parent bin.
In the child bins, we use this information and set up their decision sets centered at it.
As explained above, when the parent bin (and thus the child bins) is large, its optimal price does not predict those of the child bins well.
Therefore, the algorithm sets up conservative decision sets for the child bins, i.e., they have wide intervals.
On the other hand, when the parent bin is small, its optimal price provides an accurate indicator for those of the child bins.
Thus, the algorithm constructs decision sets with narrow ranges for the child bins around the empirically-optimal price inherited.

\begin{algorithm}
    \caption{Adaptive Binning and Exploration (ABE)}
    \label{alg:abe}
    \begin{algorithmic}[1]
        \State Input: $T$, $d$
        \State Constants: $M_1$, $M_2$, $M_3$, $M_4$
        \State Parameters: $K$; $\Delta_k$, $n_k$, $N_k$ for $k=0,\ldots,K$
        \State Initialize: partition $\mathcal P\gets \{B_{\emptyset}\}$, $p_l^{B_{\emptyset}}\gets 0$, $p_u^{B_{\emptyset}}\gets 1$, $\delta_{B_{\emptyset}}\gets 1/(N_{0}-1)$, $\bar Y_{B,j},N_{B_{\emptyset},j}\gets 0$ for $j=0,\ldots, N_{0}-1$\label{step:initialize}
        \For{$t=1$ to $T$}
        \State Observe $\bm X_t$\label{step:generate-cov}
        \State $B\gets\{B\in \mathcal P: \bm X_t\in B\}$\label{step:partition}
        \Comment{The bin in the partition that $\bm X_t$ belongs to}
        \State $k\gets l(B)$, $N(B)\gets N(B)+1$\label{step:counter-nb}
        \Comment{Determine the level and update the number of customers observed in $B$}
        \If{$k<K$}\label{step:kK}
        \Comment{If not reaching the maximal level $K$}
        \If{$N(B)< n_k$}\label{step:nb<nk}
        \Comment{If not enough data observed in $B$}
        \State $j\gets N(B)-1\pmod{N_k}$\label{step:nb-mod-nk}
        \Comment{Apply the $j$th price in the decision set}
        \State $p_t\gets p_l^B+j \delta_B$; apply $p_t$ and observe $Z_t$\label{step:policy}
        \State $\bar Y_{B,j}\gets \frac{1}{N_{B,j}+1}(N_{B,j}\bar Y_{B,j}+Z_t)$, $N_{B,j}\gets N_{B,j}+1$\label{step:update-avg}
        \Else\label{step:enough-data}
        \Comment{If sufficient data observed in $B$}
        \State $j^{\ast}\in \argmax_{j\in\{0,1,\ldots,N_k-1\}}\{\bar Y_{B,j}\}$, $p^{\ast}\gets p_l^B+j^{\ast} \delta_B$\label{step:empirical-mean}
        \Comment{Find the empirically-optimal price; if there are multiple, choose any one of them}
        \State $\mathcal P\gets (\mathcal P\setminus B)\cup \bm C(B)$\label{step:partition-update}
        \Comment{Update the partition by removing $B$ and adding its children}
        \For{$B'\in \bm C(B)$}
        \Comment{Initialization for each child bin}
        \State $N(B')\gets 0$\label{step:child-counter}
        \State $p_l^{B'}\gets \max\{0, p^{\ast}-\Delta_{k+1}/2\}$; $p_u^{B'}\gets \min\{1, p^{\ast}+\Delta_{k+1}/2\}$\label{step:child-interval}
        \Comment{The range of the decision set}
        \State $\delta_{B'}\gets (p_u^{B'}-p_l^{B'})/(N_{k+1}-1)$\label{step:def-deltab}
        \Comment{The grid size of the decision set}
        \State $N_{B',j},\bar Y_{B',j}\gets 0$, for $j=0,\ldots, N_{k+1}-1$\label{step:child-reward-init}
        \Comment{Initialize the average revenue and number of customers for each price}
        \EndFor
        \EndIf
        \Else
        \Comment{If reaching the maximal level}
        \State $p_t\gets (p_l^{B}+p_u^{B})/2$\label{step:policy-K}
        \EndIf
        \EndFor
    \end{algorithmic}
\end{algorithm}

\subsection{Description of the Algorithm}
In this section, we elaborate on the detailed steps of the ABE algorithm, shown in Algorithm~\ref{alg:abe}.

The parameters for the algorithm include
\begin{enumerate}
    \item $K$, the maximal level of the bins. When a bin is at level $K$, the algorithm no longer splits it and simply applies the median price of its decision set whenever a customer is observed in it.
    \item $\Delta_k$, the length of the interval that contains the decision set of level-$k$ bins.
    \item $n_k$, the maximal number of customers observed in a level-$k$ bin in the partition. When $n_k$ customers are observed, the bin splits.
    \item $N_k$, the number of prices to explore in the decision set of level-$k$ bins. The decision set of bin $B$ consists of equally spaced grid points of an interval $[p_l^{B},p_u^B]$, to be adaptively specified by the algorithm.
\end{enumerate}
We initialize the partition to include only the root bin $B_{\emptyset}$ in Step~\ref{step:initialize}.
Its decision set spans the whole interval $[0,1]$ with $N_0$ equally spaced grid points.
That is, the $j$th price is $j\delta_{B_{\emptyset}}\triangleq j/(N_0-1)$ for $j=0,\dots,N_{0}-1$.
The initial average revenue and the number of customers that are charged the $j$th price are set to $\bar Y_{B_{\emptyset},j}=N_{B_{\emptyset},j}=0$.

Suppose the partition is $\mathcal P_t$ at $t$ and a customer $\bm X_t$ is observed (Step~\ref{step:generate-cov}).
The algorithm determines the bin $B\in \mathcal P_t$ which the customer belongs to.
The counter $N(B)$ records the number of customers already observed in $B$ up to $t$ when $B$ is in the partition (Step~\ref{step:counter-nb}).
If the level of $B$ is $l(B)=k<K$ (i.e., $B$ is not at the maximal level) and the number of customers observed in $B$ is not sufficient (Step~\ref{step:kK} and Step~\ref{step:nb<nk}),
then the algorithm has assigned a decision set to the bin in previous steps, namely,
$\{p_l^B+j\delta_B\}$ for $j=0,\dots,N_k-1$.
There are $N_k$ prices in the set and they are equally spaced in the interval $[p_l^{B},p_u^B]$.
They are explored successively as new customers are observed in $B$ (explore $p_{l}^B$ for the first customer observed in $B$, $p_l^B+\delta_B$ for the second customer, \dots, $p_l^B+(N_k-1)\delta_B$ for the $N_k$th customer, $p_l^B$ again for the $(N_k+1)$th customer, etc.).
Therefore, the algorithm charges price $p_t=p_l^B+j\delta_B$ where $j=N(B)-1 \pmod{N_k}$ for the $N(B)$th customer observed in $B$ (Step~\ref{step:nb-mod-nk}).
Then, Step~\ref{step:update-avg} updates the average revenue and the number of customers for the $j$th price.

If the level of $B$ is $l(B)=k<K$ and we have observed a sufficient number of customers in $B$ (Step~\ref{step:kK} and Step~\ref{step:enough-data}),
then the algorithm splits $B$ and replaces it by its $2^d$ child bins in the partition (Step~\ref{step:partition-update}).
For each child bin, Step~\ref{step:child-counter} to Step~\ref{step:child-reward-init} initialize
the counter, the interval that encloses the decision set, the grid size of the decision set, and the average revenue/number of customers for each price in the decision set, respectively.
In particular, to construct the decision set of a child bin, the algorithm first computes the empirically-optimal price in the decision set of the parent bin $B$; that is, $j^{\ast}\in \argmax_{j\in\{0,1,\ldots,N_k-1\}}\{\bar Y_{B,j}\}$ in Step~\ref{step:empirical-mean}.
Then, the algorithm creates an interval centered at this empirically-optimal price with width $\Delta_{k+1}$, properly cut off by the boundaries $[0,1]$.
The decision set is then an equally spaced grid of the above interval (Step~\ref{step:child-interval} and Step~\ref{step:def-deltab}).

If the level of $B$ is already $K$, then the algorithm simply charges the median price (Step~\ref{step:policy-K}) repeatedly without further exploration.
For such a bin, its size is sufficiently small and the algorithm has narrowed the range of the decision set $K$ times.
The charged price is close enough to all $p^{\ast}(\bm x)$, $\bm x\in B$, with high probability.

\subsection{Choice of Parameters}\label{sec:choice}
We set $K=\lfloor \frac{\log(T)}{(d+4)\log (2)}\rfloor$, $\Delta_k = 2^{-k}\log(T)$, $N_k=\lceil\log(T)\rceil$, and
\begin{equation*}
        n_k = \max\left\{0, \left\lceil \frac{ 2^{4k+15}}{M^2_2\log^3(T)}(\log(T)+\log(\log(T))-(d+2)k\log(2))\right\rceil\right\}.
\end{equation*}
To give a sense of their magnitudes, the edge length of the bins at the maximal level is approximately $T^{-1/(d+4)}$.
The range of the decision set ($\Delta_k$) is proportional to the edge length of the bin ($2^{-k}$).
The number of prices in a decision set is approximately $\log(T)$.
Therefore, the grid size is $\delta_B\approx 2^{-k}$ for a level-$k$.
The number of customers to observe in a level-$k$ bin $B$ is roughly $n_k\approx 2^{4k}/\log(T)^2$ before it splits.
When $k$ is small, $n_k$ can be zero according to the expression. In this case, the algorithm immediately splits the bin without collecting any sales data in it.

\subsection{A Schematic Illustration}
We illustrate the key steps of the algorithm by an example with $d=2$.
Figure~\ref{fig:illustration} illustrates a possible outcome of the algorithm in periods
$t_1<t_2<t_3$ (top panel, mid panel, and bottom panel respectively).
Up until period $t_1$, there is a single bin and the covariates of observed customers $\bm X_t$ for $t\le t_1$ are illustrated in the top left panel.
In this case, the decision set associated with the root bin is $p\in\{0.1,0.2,\dots,0.9\}$, illustrated by the top right panel.
The average revenue $\bar Y_{B,j}$ of each price is recorded, and $p^{\ast}=0.6$ is the empirically-optimal price.
At $t_1+1$, a sufficient number of customers are observed and Step~\ref{step:enough-data} is triggered in the algorithm.
Therefore, the bin is split into four child bins.

From period $t_1+1$ to $t_2$, new customers are observed in each child bin (mid left panel).
Note that the customers observed before $t_1$ in the parent bin are no longer used and colored in gray.
For each child bin (the bottom-left bin is abbreviated to BL, etc.), the average revenues for the prices in the decision sets is demonstrated in the mid right panel.
The decision sets are centered at the empirically-optimal price of their parent bin, which is $p^{\ast}=0.6$ from the top right panel.
They have narrower ranges and finer grids than that of the parent bin.
At $t_2+1$, a sufficient number of customers are observed in BL, and it is split into four child bins.

From period $t_2+1$ to $t_3$, the partition consists of seven bins, as shown in the bottom left panel.
The BR, TL and TR bins keep observing customers and updating the average revenues, because they have not collected sufficient data.
Their status at $t_3$ is shown in the bottom panels.
In the four newly created child bins of BL (the bottom-left bin of BL is abbreviated to BL-BL, etc.),
the prices in the decision sets are used successively
and their average revenues are illustrated in the bottom right panel.
\begin{figure}[]
    \centering
    \includegraphics{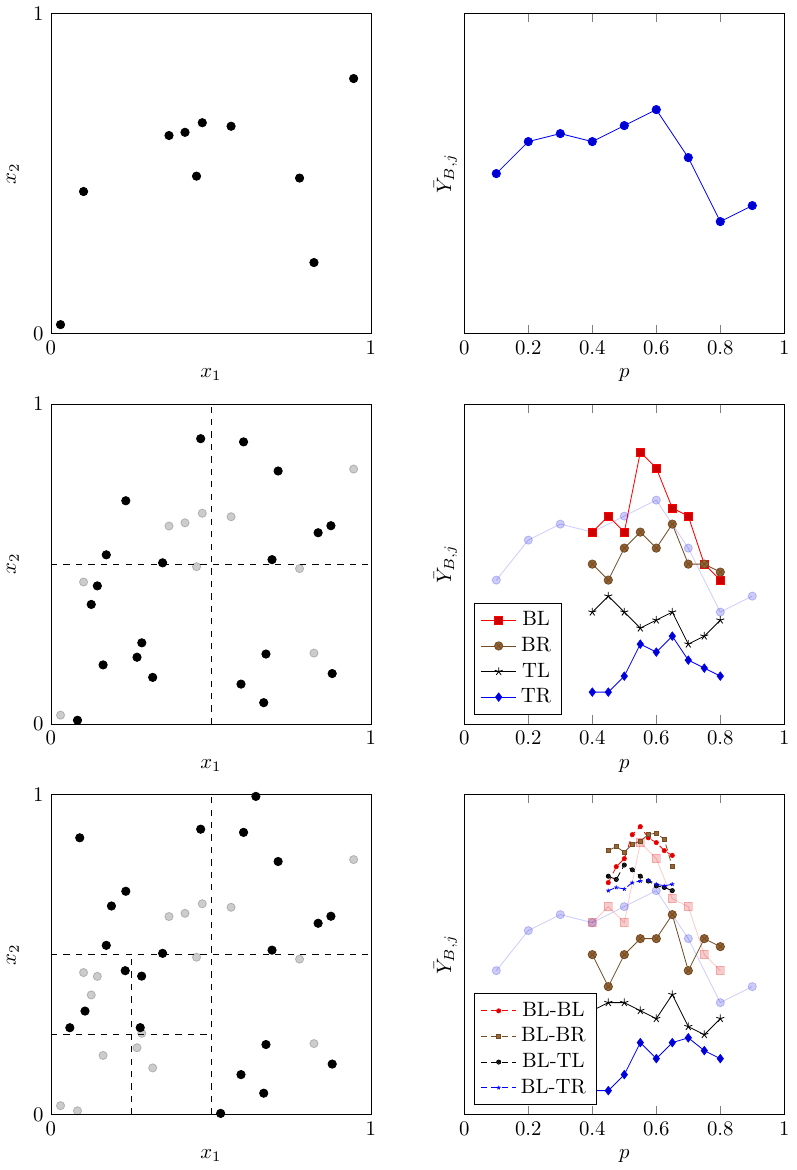}
    \caption{A schematic illustration of the ABE algorithm. }
    \label{fig:illustration}
\end{figure}

\section{Regret Analysis: Upper Bound}\label{sec:upper}
To measure the performance of the ABE algorithm,
we provide an upper bound for its regret.
\begin{theorem}\label{thm:upper-bound}
    For any function $f$ satisfying Assumption~\ref{asp:lipschitz} and~\ref{asp:maximizer}, the regret incurred by the ABE algorithm is bounded by
    \begin{equation*}
        R_{\pi_{ABE}}(T)\le C T^{ \frac{2+d}{4+d}}\log(T)^2
    \end{equation*}
    for a constant $C>0$ that is independent of $T$.
\end{theorem}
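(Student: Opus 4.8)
The plan is to split the regret into (a) a negligible term on a ``bad'' event where some empirical estimate is off, (b) the regret accumulated while exploring inside bins of level $k<K$, and (c) the regret accumulated in bins that have reached the maximal level $K$, and to show that (b) and (c) are each $O(T^{(2+d)/(4+d)}\log(T)^2)$. I will use freely that Assumption~\ref{asp:lipschitz} forces $f$ to be uniformly bounded on $[0,1)^d\times[0,1]$, so the per-period regret never exceeds a constant $R_{\max}$, and that Assumption~\ref{asp:maximizer}(1) applied to the singleton bin $B=\{\bm x\}$ gives $f^{\ast}(\bm x)-f(\bm x,p)\le M_3(p^{\ast}(\bm x)-p)^2$ for all $p$.

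\textbf{Step 1: a high-probability ``good'' event.} For each level $k$ I would fix $\varepsilon_k$ of order $M_2 2^{-2k}\log(T)^2$, with a small constant read off from the formula for $n_k$, and define $\Gscr$ as the event that for every bin $B$ that is created and eventually split, and every grid index $j$, the recorded average $\bar Y_{B,j}$ lies within $\varepsilon_{l(B)}$ of $f_B(p_l^B+j\delta_B)$. On any such $B$ each grid point receives at least $\lfloor n_k/N_k\rfloor$ plays; since the covariates entering a fixed (once created) bin are i.i.d.\ with law $\Lscr(\bm X\mid \bm X\in B)$ and the rewards are conditionally sub-Gaussian, a sub-Gaussian tail bound for $\bar Y_{B,j}-f_B(\cdot)$ (which also absorbs the fluctuation of $f(\bm X_{t_i},\cdot)$ about $f_B(\cdot)$, bounded via Assumption~\ref{asp:lipschitz}) combined with a union bound over the at most $\sum_{k\le K}2^{dk}N_k$ relevant (bin, index) pairs shows $\PR(\Gscr^c)$ is a negative power of $T$; the factor $2^{18}$ inside $n_k$ is precisely what buys this while keeping $\varepsilon_k$ small. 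Hence the regret accumulated on $\Gscr^c$ is at most $R_{\max}\,T\,\PR(\Gscr^c)=O(1)$.

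\textbf{Step 2: decision sets remain correct on $\Gscr$.} By induction on the level I would show that on $\Gscr$ the interval $[p_l^B,p_u^B]$ of every bin $B$ contains $p^{\ast}(\bm x)$ for all $\bm x\in B$; the root bin is trivial. If $B'$ is a level-$k$ bin that splits and $B\in\bm C(B')$, then the concentration on $\Gscr$ together with the quadratic lower bound in Assumption~\ref{asp:maximizer}(1) and the $O(\delta_{B'}^2)$ discretization error bound the inherited empirical optimizer by $|\hat p^{\ast}-p^{\ast}(B')|\le C_\rho\, 2^{-k}\log(T)$, while Assumptions~\ref{asp:maximizer}(2)--(3) give $|p^{\ast}(\bm x)-p^{\ast}(B')|\le M_4\sqrt d\,2^{-k}$ for $\bm x\in B'\supseteq B$. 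Since the child interval is centered at $\hat p^{\ast}$ with half-width $\Delta_{k+1}/2=2^{-k-2}\log(T)$, and clipping to $[0,1]$ cannot drop any $p^{\ast}(\bm x)\in[0,1]$, it suffices that $C_\rho<\tfrac14$ and $T$ is large, and $C_\rho$ is driven below $\tfrac14$ by the size of the factor $2^{18}$ in $n_k$. The same estimate bounds the distance from the decision played in a level-$K$ bin to every $p^{\ast}(\bm x)$ in that bin by $C_1\, 2^{-K}\log(T)$.

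\textbf{Step 3: summing the regret on $\Gscr$, and the obstacle.} On an exploration step in a level-$k$ bin ($k<K$) the played grid point and $p^{\ast}(\bm X_t)$ lie in a common interval of width $\min\{1,\Delta_k\}$, so the per-step regret is at most $M_3\min\{1,2^{-2k}\log(T)^2\}$; with at most $2^{dk}$ such bins, each explored at most $n_k=O(2^{4k}/\log(T)^2)$ times, the level-$k$ contribution is $O(2^{(2+d)k})$ up to lower-order terms, and summing the geometric series to $k=K-1$ with $2^K\le T^{1/(4+d)}$ yields $O(T^{(2+d)/(4+d)})$. In a level-$K$ bin the per-step regret is at most $M_3 C_1^2\, 2^{-2K}\log(T)^2$ over at most $T$ periods, and $2^{-2K}\le 4T^{-2/(4+d)}$ gives $O(T^{(2+d)/(4+d)}\log(T)^2)$; the $O(2^{dK})=O(T^{d/(4+d)})$ periods in which a covariate triggers a split contribute only $O(T^{d/(4+d)})$ more. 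Adding the $O(1)$ from $\Gscr^c$ finishes the proof. I expect the real difficulty to be Steps~1--2: making the concentration for $\bar Y_{B,j}$ rigorous despite the adaptive, stopping-time construction of the bins, calibrating $\varepsilon_k$ so it is at once tight enough for the induction of Step~2 and loose enough that $\PR(\Gscr^c)$ decays polynomially, and verifying that Assumption~\ref{asp:maximizer} really controls $|p^{\ast}(\bm x)-p^{\ast}(B)|$ with the correct power of the bin diameter; Step~3 is then routine bookkeeping.
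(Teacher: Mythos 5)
Your overall architecture (concentration of the grid averages, induction showing the inherited intervals keep containing the optimizers, then a level-by-level regret sum) matches the paper's in Steps 2--3, and your Step 3 bookkeeping correctly reproduces the dominant $M_3 T\,2^{-2K}\log(T)^2$ term from the level-$K$ bins. The genuine gap is in Step 1, and it is exactly the step you flagged as the ``real difficulty.'' You bound the bad-event regret by $R_{\max}\,T\,\PR(\Gscr^c)$ and assert $\PR(\Gscr^c)=O(1/T)$ via a union bound over all $\sum_{k\le K}2^{dk}N_k\asymp T^{d/(d+4)}\log T$ (bin, index) pairs. Work out the exponent: with $\varepsilon_k=a M_2\Delta_k^2$ and the algorithm's $n_k$, the per-pair failure probability is roughly $\exp\bigl(-a^2 2^{16}(\log T+\log\log T-(d+2)k\log 2)\bigr)$, so the union bound closes with $\PR(\Gscr^c)\le T^{-1}$ only if $a^2 2^{16}\gtrsim d+2$. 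But your Step 2 induction needs $|\hat p^{\ast}-p^{\ast}(B')|\lesssim\sqrt{2a}\,\Delta_k<\Delta_k/4$, i.e.\ $a<1/32$, which caps $a^2 2^{16}$ at $64$. The two requirements are incompatible once $d$ is moderately large, and nothing in the fixed constant $2^{18}$ of $n_k$ rescues this: the number of bins grows like $T^{d/(d+4)}$, nearly linearly in $T$ for large $d$, and a uniform union bound over them is simply too expensive for the exploration budget the algorithm actually spends.

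The paper's proof avoids this entirely, and the device it uses is worth internalizing. It never forms a global good event. Instead it locates, along each branch of the tree, the \emph{first} bin where the interval fails ($E_{B}^c\cap E_{P(B)}$ with $E_B=\{p^{\ast}(B)\in[p_l^B,p_u^B]\}$), and charges the resulting regret as $\sum_t M_1\PR(\bm X_t\in B)\,\PR(E_B^c\cap E_{P(B)})$ -- a bad bin only costs you in the periods when a covariate actually lands in it. Since $\sum_{\{B:l(B)=k\}}\PR(\bm X_t\in B)=1$, the $2^{dk}$-fold multiplicity of level-$k$ bins is cancelled exactly, and the bad-event contribution comes out as $O\bigl(\sum_k 2^{(d+2)k}\bigr)=O(T^{(d+2)/(d+4)})$ -- the same order as the main term, not $O(1)$ -- with a per-bin failure probability of only about $2^{(d+2)k}/(T\log T)$, which the fixed $n_k$ does deliver for every $d$. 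To repair your argument you would either need to make the constant in $n_k$ (hence $b=a^2 2^{16}$) grow linearly in $d$, changing the algorithm, or replace ``$R_{\max}T\PR(\Gscr^c)$'' by the probability-weighted accounting above, at which point you have essentially reconstructed the paper's Term 1. A secondary, fixable point: your induction maintains the stronger invariant that $[p_l^B,p_u^B]$ contains $p^{\ast}(\bm x)$ for all $\bm x\in B$, whereas the paper only tracks $p^{\ast}(B)\in[p_l^B,p_u^B]$ and recovers the pointwise statement from Assumption~\ref{asp:maximizer} parts 2--3 when needed; both work, but the weaker invariant is what makes the ``first failure'' event $E_B^c\cap E_{P(B)}$ cleanly reducible to a two-grid-point comparison $\bar Y_{P(B),j_1}\ge\bar Y_{P(B),j_2}$, which is why the paper needs no uniform concentration at all.
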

We provide a sketch of the proof here and present the details in the appendix.
In period $t$, if $\bm X_t\in B$ for a bin in the partition $B\in\mathcal P_t$, then the expected regret incurred by the ABE algorithm is $\E[(f^{\ast}(\bm X_t)-f(\bm X_t,p_t))\I{\bm X_t\in B,B\in \mathcal P_t}]$.
Since the total regret simply sums up the above quantity over $t=1,\dots,T$ and all possible $B$s,
it suffices to focus on the regret for given $t$ and $B$.
Two possible scenarios can arise:
(1) the optimal price of the aggregate demand in $B$, i.e., $p^{\ast}(B)$, is inside the range of the decision set, i.e., $p^{\ast}(B)\in [p^B_l,p^B_u]$ (Step~\ref{step:child-interval});
(2) the optimal price $p^{\ast}(B)$ is outside the range of the decision set.

Scenario one represents the regime where the algorithm is working ``normally'': up until $t$, the algorithm has successfully narrowed the optimal price $p^{\ast}(B)$ (which provides a useful indicator for all $p^{\ast}(\bm x)$, $\bm x\in B$ when $B$ is small) down to $[p^B_l,p^B_u]$.
By Assumption~\ref{asp:maximizer} part one, the regret in this scenario can be decomposed into two terms
\begin{equation*}
    f^{\ast}(\bm X_t)-f(\bm X_t,p_t)\le M_3(|p_t-p^{\ast}(B)|+|p^{\ast}(B)-p^{\ast}(\bm X_t)|)^2.
\end{equation*}
The first term can be bounded by the length of the interval $p^B_u-p^B_l$.
The second term can be bounded by the size of $B$ given $\bm X_t\in B$ by Assumption~\ref{asp:maximizer} part two and three.
By the choice of parameters in Section~\ref{sec:choice}, the length of the interval decreases as the bin size decreases.
Therefore, both terms can be well controlled when the size of $B$ is sufficiently small, or equivalently, when the level $l(B)$ is sufficiently large.
This is why a properly chosen $n_k$ can guarantee that the algorithm spends little time for large bins and collect a large amount of data for small bins.
When the bin level reaches $K$, the above two terms are small enough and no more exploration is needed.

Scenario two represents the regime where the algorithm works ``abnormally''.
In scenario two, the difference $f^{\ast}(\bm X_t)-f(\bm X_t,p_t)$ can no longer be controlled as in scenario one because $p_t$ and $p^{\ast}(\bm X_t)$ can be far apart.
To make things worse, $p^{\ast}(B)\notin [p^B_l,p^B_u]$ usually implies $p^{\ast}(B')\notin [p^{B'}_l,p^{B'}_u]$, where $B'$ is a child of $B$.
This is because (1) $p^{\ast}(B)$ is close to $p^{\ast}(B')$ for small $B$, and (2)
$[p^{B'}_l,p^{B'}_u]$ is created around the empirically-optimal price for $B$, and thus overlapping with $[p^{B}_l,p^{B}_u]$.
Therefore, for any period $s$ following $t$, the worst-case regret is $O(1)$ in that period if $\bm X_s\in B$ or its offspring.

To bound the regret in scenario two, we have to bound the probability,
which requires delicate analysis of the events.
If scenario two occurs for $B$, then during the process that we sequentially split $B_{\emptyset}$ to obtain $B$, we can find an ancestor bin of $B$ (which can be $B$ itself) that scenario two happens for the first time along the ``branch'' from $B_{\emptyset}$ all the way down to $B$.
More precisely, denoting the ancestor bin by $B_a$ and its parent by $P(B_a)$, we have
(1) $p^{\ast}(P(B_a))$ is inside $[p^{P(B_a)}_l,p^{P(B_a)}_u]$ (scenario one);
(2) after $P(B_a)$ is split, $p^{\ast}(B_a)$ is outside $[p^{B_a}_l,p^{B_a}_u]$  (scenario two).
Denote the empirically-optimal price in the decision set of $P(B_a)$ by $p^{\ast}$.
For such an event to occur, the center of the decision set of $B_a$, which is $p^{\ast}$, has to be at least $\Delta_{l(B_a)}/2$ away from $p^{\ast}(B_a)$.\footnote{Recall that $p^{B_a}_u-p^{B_a}_l=\Delta_{l(B_a)}$.}
Because of Assumption~\ref{asp:maximizer} and the choice of $\Delta_k$, the distance between $p^{\ast}(P(B_a))$ and $p^{\ast}(B_a)$ is relatively small compared to $\Delta_{l(B_a)}$.
Therefore, the empirically-optimal price $p^{\ast}$ must be far away from $p^{\ast}(P(B_a))$.
The probability of such event can be bounded using classic concentration inequalities for sub-Gaussian random variables:
the prices that are closer to $p^{\ast}(P(B_a))$ and thus have higher means turn out to generate lower average revenue than $p^{\ast}$;
this event is extremely unlikely to happen when we have collected a large amount of sales data for each price in the decision set.

The total regret aggregates those in scenario one and two for all possible combinations of $B$ and $B_a$.
It matches the quantity $O(\log(T)^2T^{(2+d)/(4+d)})$ presented in the theorem.

\section{Regret Analysis: Lower Bound}\label{sec:lower-bound}
In this section, we show that the minimax regret is no lower than $cT^{(2+d)/(4+d)}$ for some constant $c$.
Combining with the last section, we conclude that no non-anticipating policy does better than the ABE algorithm in terms of the order of magnitude of the regret in $T$ (neglecting logarithmic terms).

We first construct a family of functions that satisfy Assumption~\ref{asp:lipschitz} and~\ref{asp:maximizer}.
The functions in the family are selected to be ``difficult'' to distinguish.
By doing so, we will prove that any policy has to spend a substantial amount of time exploring prices that generate low revenues but help to differentiate the functions.
Otherwise, the incapability to correctly identify the underlying function is costly in the long run.
Therefore, unable to contain both sources of regret at the same time, no policy can achieve lower regret than the quantity stated in Theorem~\ref{thm:lower-bound}.

Before introducing the family of functions, we define $\partial B$ to be the boundary of a convex set in $[0,1)^d$.
Let $D(B_1,B_2)$ be the Euclidean distance between two sets $B_1$ and $B_2$.
That is $D(B_1,B_2)\triangleq \inf\left\{\|\bm x_1-\bm x_2\|_2:\bm x_1\in B_1,\bm x_2\in B_2\right\}$.
We allow $B_1$ or $B_2$ to be a singleton.
To define $\mathcal C$, we partition the covariate space $[0,1)^d$ into $M^d$ equally sized bins.
That is, each bin has the following form: for $(k_1,\dots,k_d)\in\{1,\dots,M\}^d$,
\begin{equation*}
    \left\{\bm x: \frac{k_i-1}{M}\le x_i< \frac{k_i}{M},\quad  \forall\; i=1,\dots,d\right\}.
\end{equation*}
We number those bins by $1,\dots, M^d$ in an arbitrary order, i.e., $B_1,\dots,B_{M^d}$.
Each function $f(\bm x,p)\in\mathcal C$ is indexed by a tuple $w\in \{0,1\}^{M^d}$, whose
$j$th index determines the behavior of $f_w(\bm x,p)$ in $ B_j$.
More precisely, for $\bm x\in B_j$,
the personalized demand function is
\begin{align*}
    d_w(\bm x,p)= \begin{cases}
        \frac{2}{3}- \frac{p}{2} & w_j=0\\
        \frac{2}{3}- \frac{p}{2}+ \left( \frac{1}{3}- \frac{p}{2}\right)D(\bm x,\partial B_j) & w_j=1
    \end{cases}
\end{align*}
and thus
\begin{align*}
    f_w(\bm x,p)= \begin{cases}
        p\left(\frac{2}{3}- \frac{p}{2}\right) & w_j=0\\
        p\left(\frac{2}{3}- \frac{p}{2}+ \left( \frac{1}{3}- \frac{p}{2}\right)D(\bm x,\partial B_j)\right)& w_j=1
    \end{cases}
\end{align*}
The optimal personalized price for customer $\bm x\in B_j$ is $p^{\ast}(\bm x)=2/3$ if $w_j=0$ and $p^{\ast}(\bm x)= \frac{2+D(\bm x,\partial B_j)}{3(1+D(\bm x,\partial B_j))}$ if $w_j=1$.

The construction of $\mathcal C$ follows a similar idea to \citet{rigollet2010nonparametric}.
For a given $f_w\in \mathcal C$, we can always find another function $f_{w'}\in\mathcal C$ that only differs from $f$ in a single bin $B_j$ by setting $w'$ to be equal to $w$ except for the $j$th index.
The firm can only rely on the covariates generated in $B_j$ to distinguish between $f_w$ and $f_{w'}$.
For small bins (i.e., large $M$), this is particularly costly because there are only a tiny fraction of customers observed in a particular bin and the difference $|f_w-f_{w'}|=p(1/3-p/2)D(\bm x,\partial B_j)$ becomes tenuous.
It also requires $p$ to be far from $2/3$ to detect the difference, which happens to be the optimal price when $w_j=0$.
This makes the exploration/exploitation trade-off hard to balance.
Now a policy has to carry out the task for $M^d$ bins, i.e., distinguishing the underlying function $f_w$ with $M^d$ tuples that only differ from $w$ in one index.
The cost is inevitable and adds to the lower bound of the regret.
Moreover, we assume the customers $\bm X$ are uniformly distributed in $[0,1)^d$.

In the appendix, we show that the constructed $\mathcal C$ satisfies all the assumptions.
The main theorem below shows the lower bound for the regret.
\begin{theorem}\label{thm:lower-bound}
    For the constructed $\mathcal C$, any non-anticipating policy $\pi$ has regret
    \begin{equation*}
        \sup_{f\in \mathcal C}R_\pi(T)\ge c T^{\frac{2+d}{4+d}}
    \end{equation*}
    for a constant $c>0$.
\end{theorem}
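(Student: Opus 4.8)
The plan is to use the standard information-theoretic machinery for minimax lower bounds in nonparametric problems: reduce the regret lower bound to a collection of binary hypothesis-testing problems, one per bin $B_j$, and show that each test is hard enough that any policy must incur a nontrivial regret contribution from each bin. Then I would optimize the free parameter $M$ (the number of bins per dimension) to get the stated exponent $(2+d)/(4+d)$.

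\medskip

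\textbf{Step 1: Reduce to testing within each bin.}
First I would argue that, conditioning on $\bm X_t\in B_j$, the decision analyst faces a local learning problem: distinguish $w_j=0$ (where $p^\ast(\bm x)=0$ and the function is $-p^2$) from $w_j=1$ (where $p^\ast(\bm x)=d(\bm x,\partial B_j)$). The regret contribution from periods with $\bm X_t\in B_j$ depends on which hypothesis holds and on how much the policy explores ``large'' prices in that bin. The key point, following \citet{rigollet2010nonparametric}, is that because $w$ ranges over all of $\{0,1\}^{M^d}$, I can apply an averaging (Assouad-type) argument: the worst-case regret over $\mathcal C$ is at least the average over a well-chosen subfamily, and this average decomposes as a sum of $M^d$ independent per-bin terms, each a function of a single bit $w_j$.

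\textbf{Step 2: Lower-bound the per-bin regret via a two-point argument.}
Fix a bin $B_j$ and consider the two functions $f_w$ and $f_{w'}$ that differ only in the $j$th coordinate. Let $N_j$ be the (random) number of periods $t\le T$ with $\bm X_t\in B_j$; since $\bm X$ is uniform, $\E[N_j]=T/M^d$. Within these periods, the policy picks prices $\pi_t$. If under $f_{w'}$ (the case $w_j=1$) the policy rarely plays prices near $p^\ast(\bm x)\approx 1/2M$, it incurs regret of order $N_j\cdot(1/M)^2$ by Assumption~\ref{asp:maximizer} part one (the quadratic gap). If it does play such prices often under $f_{w'}$, then by a change-of-measure / Pinsker argument it must also play them often under $f_w$ (the case $w_j=0$), where the optimal price is $0$ and each such play costs $\Omega((1/M)^2)$. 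The KL divergence between the reward distributions at a fixed price $p$ in the two models is $O(p^2 d(\bm x,\partial B_j)^2)\le O(1/M^4)$ because the noise is standard normal and the mean gap is $p\,d(\bm x,\partial B_j)\le 1/2M$; summed over $N_j$ periods this is $O(N_j/M^4)$. For the two models to be indistinguishable I need $\E[N_j]/M^4=O(1)$, i.e.\ $T/M^{d+4}=O(1)$. In either branch the per-bin regret is $\Omega\bigl(\min\{T/M^d,\,M^4\}\cdot M^{-2}\bigr)$ — roughly $\Omega(M^{-2}\cdot\min\{T/M^d, M^4\})$.

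\textbf{Step 3: Sum over bins and optimize $M$.}
Summing over all $M^d$ bins gives a total lower bound of order $M^{d-2}\min\{T/M^d,M^4\} = \min\{T/M^2,\,M^{d+2}\}$. Balancing the two terms: $T/M^2 = M^{d+2}$ forces $M^{d+4}=T$, i.e.\ $M=T^{1/(d+4)}$, and then the common value is $T/M^2 = T^{1-2/(d+4)} = T^{(d+2)/(d+4)}$. This is exactly the claimed rate. I would then note that $M=T^{1/(d+4)}\ge 1$ for $T\ge 1$, so the construction is valid, and check that the constant $c$ can be made explicit and independent of $T$.

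\medskip

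\textbf{Main obstacle.}
The delicate part is Step 2 — specifically, making the change-of-measure argument rigorous when the policy is adaptive. The number $N_j$ of visits to bin $B_j$ is random and the prices played are $\mathcal F_{t-1}$-measurable, so I cannot simply treat the $N_j$ samples as i.i.d.; I need a martingale/likelihood-ratio version of the testing inequality (e.g.\ bounding the KL divergence between the laws of the full observation sequence under $f_w$ and $f_{w'}$ by $\E[\sum_{t:\bm X_t\in B_j}\mathrm{KL}(\cdot)]$ via the chain rule, which holds even for adaptive designs because the two models agree outside $B_j$ and the covariate process is the same). A second subtlety is that the ``either/or'' dichotomy in Step 2 must be phrased as a single inequality valid for every policy: the clean way is $\sup_{w}R_{\pi}(f_w)\ge \tfrac12\bigl(R_\pi(f_w)+R_\pi(f_{w'})\bigr)$ and then lower-bound the right-hand side using the standard inequality $\mathrm{regret}(f_w)+\mathrm{regret}(f_{w'})\gtrsim N_j M^{-2}\bigl(1-\|\mathrm{law}_w-\mathrm{law}_{w'}\|_{TV}\bigr)$, with total variation controlled by KL. Handling the fact that within $B_j$ the optimal price $p^\ast(\bm x)=d(\bm x,\partial B_j)$ varies with $\bm x$ (rather than being a single value) requires either restricting attention to a sub-bin where $d(\bm x,\partial B_j)$ is bounded below by a constant multiple of $1/M$, or integrating the quadratic loss over $B_j$ — either works and only changes constants.
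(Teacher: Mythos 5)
Your proposal follows essentially the same route as the paper's proof: an Assouad-type averaging over the $2^{M^d}$ functions, a per-bin two-point test whose difficulty is controlled by the chain rule of the KL divergence (which is how the paper, too, handles adaptivity), the same exploration-versus-indistinguishability trade-off (the paper packages it as the quantity $z_{w_{-j}}$ and two complementary lemmas rather than your either/or dichotomy, but these are equivalent), the same sub-bin restriction $d(\bm x,\partial B_j)\gtrsim 1/M$ to handle the varying maximizer, and the same choice $M=\lceil T^{1/(d+4)}\rceil$. The only quibble is that your bound $p^2 d(\bm x,\partial B_j)^2 = O(1/M^4)$ on the per-round KL holds only when $\pi_t=O(1/M)$, but your dichotomy already charges large prices to the regret under $w_j=0$, so the argument is sound.
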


\section{Future Research}

As shown in Theorem~\ref{thm:upper-bound} and Theorem~\ref{thm:lower-bound},
the best achievable regret of the problem is of order $T^{(2+d)/(4+d)}$.
As a result, the knowledge of the sparsity structure of the covariate is essential in designing pricing policies.
More precisely, the provided customer covariate is of dimension $d$, while the personalized demand $d(\bm X,p)$ may only depend on $d'$ entries of the covariate where $d'\ll d$.
In this case, being able to identify the $d'$ entries out of $d$ significantly decreases the incurred regret from $T^{(2+d)/(4+d)}$ to $T^{(2+d')/(4+d')}$.
Indeed, in the ABE algorithm, if the sparsity structure is known, then a bin is split into $2^{d'}$ instead of $2^d$ child bins.
It pools the observations that only differ in the dimensions corresponding to the redundant covariates so that more observations are available in a bin, and thus substantially reduces the exploration cost.
An important research question is then whether it is possible to
design a binning algorithm that selects one dimension and the position to split, based on a certain criterion, like regression/classification decision trees \citep{hastie01statisticallearning}.
This may significantly improve the regret in the presence of sparse covariates.


\bibliography{ref.bib}

\begin{thebibliography}{}

\bibitem[\protect\citeauthoryear{Agrawal}{Agrawal}{1995}]{agrawal1995continuum}
Agrawal, R. (1995).
\newblock The continuum-armed bandit problem.
\newblock {\em SIAM journal on control and optimization\/}~{\em 33\/}(6),
  1926--1951.

\bibitem[\protect\citeauthoryear{Agrawal and Goyal}{Agrawal and
  Goyal}{2012}]{agrawal2012analysis}
Agrawal, S. and N.~Goyal (2012).
\newblock Analysis of thompson sampling for the multi-armed bandit problem.
\newblock In {\em Conference on Learning Theory}, pp.\  39--1.

\bibitem[\protect\citeauthoryear{Araman and Caldentey}{Araman and
  Caldentey}{2009}]{araman2009dynamic}
Araman, V.~F. and R.~Caldentey (2009).
\newblock Dynamic pricing for nonperishable products with demand learning.
\newblock {\em Operations research\/}~{\em 57\/}(5), 1169--1188.

\bibitem[\protect\citeauthoryear{Auer, Ortner, and Szepesv{\'a}ri}{Auer
  et~al.}{2007}]{Auer2007}
Auer, P., R.~Ortner, and C.~Szepesv{\'a}ri (2007).
\newblock {\em Improved Rates for the Stochastic Continuum-Armed Bandit
  Problem}, pp.\  454--468.
\newblock Berlin, Heidelberg: Springer Berlin Heidelberg.

\bibitem[\protect\citeauthoryear{Ban and Keskin}{Ban and
  Keskin}{2017}]{ban2017personalized}
Ban, G. and N.~B. Keskin (2017).
\newblock Personalized dynamic pricing with machine learning.
\newblock {\em Working paper\/}.

\bibitem[\protect\citeauthoryear{Bastani and Bayati}{Bastani and
  Bayati}{2015}]{bastani2015online}
Bastani, H. and M.~Bayati (2015).
\newblock Online decision-making with high-dimensional covariates.
\newblock {\em Working paper\/}.

\bibitem[\protect\citeauthoryear{Besbes and Zeevi}{Besbes and
  Zeevi}{2009}]{besbes2009dynamic}
Besbes, O. and A.~Zeevi (2009).
\newblock Dynamic pricing without knowing the demand function: Risk bounds and
  near-optimal algorithms.
\newblock {\em Operations Research\/}~{\em 57\/}(6), 1407--1420.

\bibitem[\protect\citeauthoryear{Besbes and Zeevi}{Besbes and
  Zeevi}{2012}]{besbes2012blind}
Besbes, O. and A.~Zeevi (2012).
\newblock Blind network revenue management.
\newblock {\em Operations research\/}~{\em 60\/}(6), 1537--1550.

\bibitem[\protect\citeauthoryear{Broder and Rusmevichientong}{Broder and
  Rusmevichientong}{2012}]{broder2012dynamic}
Broder, J. and P.~Rusmevichientong (2012).
\newblock Dynamic pricing under a general parametric choice model.
\newblock {\em Operations Research\/}~{\em 60\/}(4), 965--980.

\bibitem[\protect\citeauthoryear{Bubeck and Cesa-Bianchi}{Bubeck and
  Cesa-Bianchi}{2012}]{bubeck2012regret}
Bubeck, S. and N.~Cesa-Bianchi (2012).
\newblock Regret analysis of stochastic and nonstochastic multi-armed bandit
  problems.
\newblock {\em Foundations and Trends{\textregistered} in Machine
  Learning\/}~{\em 5\/}(1), 1--122.

\bibitem[\protect\citeauthoryear{Bubeck, Munos, Stoltz, and
  Szepesv{\'a}ri}{Bubeck et~al.}{2011}]{bubeck2011x}
Bubeck, S., R.~Munos, G.~Stoltz, and C.~Szepesv{\'a}ri (2011).
\newblock X-armed bandits.
\newblock {\em Journal of Machine Learning Research\/}~{\em 12\/}(May),
  1655--1695.

\bibitem[\protect\citeauthoryear{Cesa-Bianchi and Lugosi}{Cesa-Bianchi and
  Lugosi}{2006}]{cesa2006prediction}
Cesa-Bianchi, N. and G.~Lugosi (2006).
\newblock {\em Prediction, learning, and games}.
\newblock Cambridge university press.

\bibitem[\protect\citeauthoryear{Chen and Gallego}{Chen and
  Gallego}{2018}]{chen2018primal}
Chen, N. and G.~Gallego (2018).
\newblock A primal-dual learning algorithm for personalized dynamic pricing
  with an inventory constraint.
\newblock {\em Working paper\/}.

\bibitem[\protect\citeauthoryear{Cheung, Simchi-Levi, and Wang}{Cheung
  et~al.}{2017}]{cheung2017dynamic}
Cheung, W.~C., D.~Simchi-Levi, and H.~Wang (2017).
\newblock Dynamic pricing and demand learning with limited price
  experimentation.
\newblock {\em Operations Research\/}~{\em 65\/}(6), 1722--1731.

\bibitem[\protect\citeauthoryear{Cohen, Lobel, and Paes~Leme}{Cohen
  et~al.}{2016}]{cohen2016feature}
Cohen, M.~C., I.~Lobel, and R.~Paes~Leme (2016).
\newblock Feature-based dynamic pricing.
\newblock {\em Working paper\/}.

\bibitem[\protect\citeauthoryear{den Boer}{den Boer}{2015}]{den2015dynamic}
den Boer, A.~V. (2015).
\newblock Dynamic pricing and learning: historical origins, current research,
  and new directions.
\newblock {\em Surveys in operations research and management science\/}~{\em
  20\/}(1), 1--18.

\bibitem[\protect\citeauthoryear{den Boer and Zwart}{den Boer and
  Zwart}{2014}]{denboer2014simul}
den Boer, A.~V. and B.~Zwart (2014).
\newblock Simultaneously learning and optimizing using controlled variance
  pricing.
\newblock {\em Management Science\/}~{\em 60\/}(3), 770--783.

\bibitem[\protect\citeauthoryear{Elmachtoub, McNellis, Oh, and
  Petrik}{Elmachtoub et~al.}{2017}]{elmachtoub2017practical}
Elmachtoub, A.~N., R.~McNellis, S.~Oh, and M.~Petrik (2017).
\newblock A practical method for solving contextual bandit problems using
  decision trees.
\newblock {\em Working paper\/}.

\bibitem[\protect\citeauthoryear{Farias and Van~Roy}{Farias and
  Van~Roy}{2010}]{farias2010dynamic}
Farias, V.~F. and B.~Van~Roy (2010).
\newblock Dynamic pricing with a prior on market response.
\newblock {\em Operations Research\/}~{\em 58\/}(1), 16--29.

\bibitem[\protect\citeauthoryear{Foucart and Rauhut}{Foucart and
  Rauhut}{2013}]{foucart2013mathematical}
Foucart, S. and H.~Rauhut (2013).
\newblock {\em A mathematical introduction to compressive sensing}, Volume~1.
\newblock Birkh{\"a}user Basel.

\bibitem[\protect\citeauthoryear{Gallego and Topaloglu}{Gallego and
  Topaloglu}{2018}]{ggbook}
Gallego, G. and H.~Topaloglu (2018).
\newblock Revenue management and pricing analytics.
\newblock In preparation.

\bibitem[\protect\citeauthoryear{Goldenshluger and Zeevi}{Goldenshluger and
  Zeevi}{2013}]{goldenshluger2013linear}
Goldenshluger, A. and A.~Zeevi (2013).
\newblock A linear response bandit problem.
\newblock {\em Stochastic Systems\/}~{\em 3\/}(1), 230--261.

\bibitem[\protect\citeauthoryear{Goldenshluger, Zeevi, et~al.}{Goldenshluger
  et~al.}{2009}]{goldenshluger2009woodroofe}
Goldenshluger, A., A.~Zeevi, et~al. (2009).
\newblock Woodroofe’s one-armed bandit problem revisited.
\newblock {\em The Annals of Applied Probability\/}~{\em 19\/}(4), 1603--1633.

\bibitem[\protect\citeauthoryear{Hastie, Tibshirani, and Friedman}{Hastie
  et~al.}{2001}]{hastie01statisticallearning}
Hastie, T., R.~Tibshirani, and J.~Friedman (2001).
\newblock {\em The Elements of Statistical Learning}.
\newblock Springer Series in Statistics. New York, NY, USA: Springer New York
  Inc.

\bibitem[\protect\citeauthoryear{Javanmard and Nazerzadeh}{Javanmard and
  Nazerzadeh}{2016}]{javanmard2016dynamic}
Javanmard, A. and H.~Nazerzadeh (2016).
\newblock Dynamic pricing in high-dimensions.
\newblock {\em Working paper\/}.

\bibitem[\protect\citeauthoryear{Keskin and Zeevi}{Keskin and
  Zeevi}{2014}]{keskin2014dynamic}
Keskin, N.~B. and A.~Zeevi (2014).
\newblock Dynamic pricing with an unknown demand model: Asymptotically optimal
  semi-myopic policies.
\newblock {\em Operations Research\/}~{\em 62\/}(5), 1142--1167.

\bibitem[\protect\citeauthoryear{Kleinberg, Slivkins, and Upfal}{Kleinberg
  et~al.}{2008}]{kleinberg2008multi}
Kleinberg, R., A.~Slivkins, and E.~Upfal (2008).
\newblock Multi-armed bandits in metric spaces.
\newblock In {\em Proceedings of the fortieth annual ACM symposium on Theory of
  computing}, pp.\  681--690. ACM.

\bibitem[\protect\citeauthoryear{Kleinberg}{Kleinberg}{2005}]{kleinberg2005nearly}
Kleinberg, R.~D. (2005).
\newblock Nearly tight bounds for the continuum-armed bandit problem.
\newblock In {\em Advances in Neural Information Processing Systems}, pp.\
  697--704.

\bibitem[\protect\citeauthoryear{Kuleshov and Precup}{Kuleshov and
  Precup}{2014}]{kuleshov2014algorithms}
Kuleshov, V. and D.~Precup (2014).
\newblock Algorithms for multi-armed bandit problems.
\newblock {\em Working paper\/}.

\bibitem[\protect\citeauthoryear{Langford and Zhang}{Langford and
  Zhang}{2008}]{langford2008epoch}
Langford, J. and T.~Zhang (2008).
\newblock The epoch-greedy algorithm for multi-armed bandits with side
  information.
\newblock In {\em Advances in neural information processing systems}, pp.\
  817--824.

\bibitem[\protect\citeauthoryear{Lei, Jasin, and Sinha}{Lei
  et~al.}{2017}]{lei2014near}
Lei, Y., S.~Jasin, and A.~Sinha (2017).
\newblock Near-optimal bisection search for nonparametric dynamic pricing with
  inventory constraint.
\newblock {\em Working paper\/}.

\bibitem[\protect\citeauthoryear{Nambiar, Simchi-Levi, and Wang}{Nambiar
  et~al.}{2016}]{nambiar2016dynamic}
Nambiar, M., D.~Simchi-Levi, and H.~Wang (2016).
\newblock Dynamic learning and price optimization with endogeneity effect.
\newblock {\em Working paper\/}.

\bibitem[\protect\citeauthoryear{Perchet and Rigollet}{Perchet and
  Rigollet}{2013}]{perchet2013}
Perchet, V. and P.~Rigollet (2013).
\newblock The multi-armed bandit problem with covariates.
\newblock {\em The Annals of Statistics\/}~{\em 41\/}(2), 693--721.

\bibitem[\protect\citeauthoryear{Qiang and Bayati}{Qiang and
  Bayati}{2016}]{qiang2016dynamic}
Qiang, S. and M.~Bayati (2016).
\newblock Dynamic pricing with demand covariates.
\newblock {\em Working paper\/}.

\bibitem[\protect\citeauthoryear{Rigollet and Zeevi}{Rigollet and
  Zeevi}{2010}]{rigollet2010nonparametric}
Rigollet, P. and A.~Zeevi (2010).
\newblock Nonparametric bandits with covariates.
\newblock In A.~T. Kalai and M.~Mohri (Eds.), {\em COLT}, pp.\  54--66.
  Omnipress.

\bibitem[\protect\citeauthoryear{Slivkins}{Slivkins}{2014}]{slivkins2014contextual}
Slivkins, A. (2014).
\newblock Contextual bandits with similarity information.
\newblock {\em The Journal of Machine Learning Research\/}~{\em 15\/}(1),
  2533--2568.

\bibitem[\protect\citeauthoryear{Tsybakov}{Tsybakov}{2009}]{tsybakov2009introduction}
Tsybakov, A.~B. (2009).
\newblock {\em Introduction to Nonparametric Estimation\/} (1 ed.).
\newblock Springer-Verlag New York.

\bibitem[\protect\citeauthoryear{Tsybakov et~al.}{Tsybakov
  et~al.}{2004}]{tsybakov2004optimal}
Tsybakov, A.~B. et~al. (2004).
\newblock Optimal aggregation of classifiers in statistical learning.
\newblock {\em The Annals of Statistics\/}~{\em 32\/}(1), 135--166.

\bibitem[\protect\citeauthoryear{Wang, Deng, and Ye}{Wang
  et~al.}{2014}]{wang2014close}
Wang, Z., S.~Deng, and Y.~Ye (2014).
\newblock Close the gaps: A learning-while-doing algorithm for single-product
  revenue management problems.
\newblock {\em Operations Research\/}~{\em 62\/}(2), 318--331.

\bibitem[\protect\citeauthoryear{Yang, Zhu, et~al.}{Yang
  et~al.}{2002}]{yang2002randomized}
Yang, Y., D.~Zhu, et~al. (2002).
\newblock Randomized allocation with nonparametric estimation for a multi-armed
  bandit problem with covariates.
\newblock {\em The Annals of Statistics\/}~{\em 30\/}(1), 100--121.

\end{thebibliography}

\singlespacing
\newpage
\appendix

\begin{center}
    {\large\textbf{Online Appendix for Nonparametric Pricing Analytics with Customer Covariates}}
\end{center}
\section{Table of Notations}
\begin{table}[h]
    \centering
    \begin{tabular}{cl}
        \toprule
        $\lceil x\rceil$& The smallest integer that does not exceed $x$\\
        $( x)^+$& The positive part of $x$\\
        $\#\{\}$& The cardinality of a set\\
        $\mathcal F_t$& The $\sigma$-algebra generated by $(X_1,\pi_1,Z_1,\dots,X_{t-1},\pi_{t-1},Z_{t-1})$\\
        $\mu_{\bm X}$ & The distribution of the covariate $\bm X$ over $[0,1)^d$\\
        $P^{\ast}(B)$ & $\argmax_{p\in[0,1]}\left\{\E[f(\bm X,p)|\bm X\in B]\right\}$\\
        $P^{\ast}_B$ & The empirically-optimal decision in the decision set for $B$\\
        $\partial B$ & The boundary of $B$\\
        \bottomrule
    \end{tabular}
    \caption{A table of notations used in the paper.}
    \label{tab:notation}
\end{table}
\section{Proofs}
\begin{proof}[Proof of Proposition~\ref{prop:quadratic}:]
    Define for $p\in[0,1]$
    \begin{equation*}
        g(p) = \begin{cases}{}
            \frac{f_B(p^{\ast}(B))-f_B(p)}{(p^{\ast}(B)-p)^2} & p\neq p^{\ast}(B)\\
                                -\frac{f''_B(p^{\ast}(B))}{2} & p = p^{\ast}(B).
        \end{cases}
    \end{equation*}
    By L'Hopital's rule, $g(p)$ is continuous at $p^{\ast}(B)$.
    In addition, because $f_B(p)$ is continuous, $g(p)$ is continuous for all $p\in[0,1]$.
    By Weierstrass's extreme value theorem, we have $g(p)\in[M_2, M_3]$ and both $M_2$ and $M_3$ are attained.
    By the definition and the uniqueness of the maximizer, $g(p)>0$ for $p\in[0,1]$.
    Therefore, we must have $M_2,M_3>0$.
    This establishes the result.
%
%
\end{proof}
\subsection{Upper Bound in Section~\ref{sec:upper}}
We first introduce the following lemmas.
\begin{lemma}\label{lem:bernoulli-subgaussian}
    A Bernoulli random variable $X$ satisfies
    \begin{equation*}
        \E[\exp(t(X-\E[X])]\le \exp(t^2/8)
    \end{equation*}
\end{lemma}
This lemma follows directly from the Hoeffding's lemma. It implies that a Bernoulli random variable is sub-Gaussian.
\begin{lemma}
    \label{lem:sub-g-int}
    Suppose for given $\bm x$ and $p$, the random variable $Z(\bm x,p)$ is sub-Gaussian with parameter $\sigma$, i.e.,
    \begin{equation*}
        \E[\exp(t(Z-\E[Z])]\le \exp(\sigma t^2)
    \end{equation*}
    for all $t\in\R$.
    Then the distribution of $Z(\bm X,p)$ conditional on $\bm X\in B$ for a set $B$ is still sub-Gaussian with the same parameter.
\end{lemma}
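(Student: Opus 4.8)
The plan is to reduce the claim to the pointwise‐in‐$\bm x$ hypothesis of Assumption~\ref{asp:sub-gaussian} by conditioning on the realized value of $\bm X$ and using the tower property. Write $f_B(p)=\E[Z(\bm X,p)\mid\bm X\in B]$, which equals $\E[f(\bm X,p)\mid\bm X\in B]$ since the noise is mean zero given $\bm X$. Decomposing $Z(\bm X,p)-f_B(p)=\bigl(Z(\bm X,p)-f(\bm X,p)\bigr)+\bigl(f(\bm X,p)-f_B(p)\bigr)$ and pulling the $\bm X$-measurable factor out of the inner conditional expectation gives, for every $t\in\R$,
\begin{equation*}
    \E\!\left[e^{t(Z(\bm X,p)-f_B(p))}\,\middle|\,\bm X\in B\right]
    \;=\;\E\!\left[\,e^{t(f(\bm X,p)-f_B(p))}\,\E\!\left[e^{t(Z(\bm X,p)-f(\bm X,p))}\,\middle|\,\bm X\right]\,\middle|\,\bm X\in B\right].
\end{equation*}

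First I would dispose of the ``local noise'' factor: by Assumption~\ref{asp:sub-gaussian} the inner conditional expectation is at most $e^{\sigma t^2}$ uniformly in $\bm x$, so it factors out of the outer expectation, leaving the right-hand side bounded by $e^{\sigma t^2}\,\E\!\left[e^{t(f(\bm X,p)-f_B(p))}\mid\bm X\in B\right]$. The remaining factor is controlled by the Lipschitz Assumption~\ref{asp:lipschitz}: given $\bm X\in B$, the variable $f(\bm X,p)-f_B(p)$ is mean zero and supported in an interval whose length is the oscillation of $f(\cdot,p)$ over $B$, hence at most $M_1 d_B\le M_1\sqrt d$. Hoeffding's lemma (bounded mean-zero variables are sub-Gaussian) then bounds that factor by $\exp\!\bigl(\tfrac{1}{8}M_1^2 d\,t^2\bigr)$, so the conditional law of $Z(\bm X,p)$ given $\bm X\in B$ is sub-Gaussian with parameter $\sigma+\tfrac{1}{8}M_1^2 d$ — a uniform constant, independent of $B$, $p$ and $t$, which is all the subsequent concentration bounds for the empirical averages $\bar Y_{B,j}$ require.

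The hard part, and the point I would flag explicitly, is exactly this last factor. The naive one-line telescoping $\E[\,\cdot\mid\bm X\in B]=\E[\E[\,\cdot\mid\bm X]\mid\bm X\in B]\le e^{\sigma t^2}$ only works if the centering constant $f_B(p)$ coincides with the pointwise mean $f(\bm x,p)$, i.e.\ if $f(\cdot,p)$ is constant on $B$; in general $\E[e^{t(f(\bm X,p)-f_B(p))}\mid\bm X\in B]\ge 1$ by Jensen, so the sub-Gaussian parameter genuinely inflates by a term governed by the oscillation of $f$ on $B$. Consequently I expect the substantive content to be ``sub-Gaussian with a universal constant parameter'', obtained by pairing the unconditional hypothesis on the noise with the Lipschitz bound and Hoeffding's lemma as above; if the intended reading of ``the same parameter'' is literal, one must additionally assume the reward function does not vary in $\bm x$ over $B$, which is not the case here, so the Lipschitz step is essential.
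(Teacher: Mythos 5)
Your proof is correct, and the concern you flag at the end is not hypothetical: it is precisely where the paper's own argument goes astray. The paper's proof factors the conditional moment generating function as
\begin{equation*}
\frac{\int_B \E\left[\exp(t(Z(\bm x,p)-\E[Z(\bm x,p)]))\right] d\mu_{\bm X}(\bm x)}{\int_B d\mu_{\bm X}(\bm x)}\times \frac{\int_B\exp(t\E[Z(\bm x,p)])\,d\mu_{\bm X}(\bm x)}{\exp(t\E[Z(\bm X,p)\mid\bm X\in B])\int_B d\mu_{\bm X}(\bm x)}
\end{equation*}
and then treats the second factor as equal to one. Neither step survives scrutiny: the normalized integral of a product is not the product of the normalized integrals, and even taken on its own the second factor satisfies $\ge 1$ by Jensen's inequality (with strict inequality whenever $f(\cdot,p)$ is nonconstant on $B$), not $\le 1$. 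Your zero-noise example shows the literal conclusion ``sub-Gaussian with the same parameter'' is false in general. Your tower-property decomposition is the correct repair: it cleanly separates the pointwise noise term, bounded by $e^{\sigma t^2}$ via Assumption~\ref{asp:sub-gaussian}, from the oscillation term $f(\bm X,p)-f_B(p)$, which given $\bm X\in B$ is mean zero and confined to an interval of length at most $M_1 d_B\le M_1\sqrt{d}$ by Assumption~\ref{asp:lipschitz}, and hence is handled by Hoeffding's lemma. The resulting parameter $\sigma+M_1^2 d/8$ is a uniform constant over all bins, decisions and $t$, which is all the downstream argument needs: the concentration bound for $\bar Y_{P(B),j_1}-\bar Y_{P(B),j_2}$ in the proof of Theorem~\ref{thm:upper-bound} only requires some fixed sub-Gaussian constant, so substituting $\sigma+M_1^2 d/8$ for $\sigma$ merely rescales $c_1$ and the choice of $n_k$ without changing the order of the regret.
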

\begin{proof}
    Let $\mu_{\bm X}$ denote the distribution of $\bm X$.
    We have that for all $t\in\R$
    \begin{align*}
        &\E[\exp(t(Z(\bm X,p)-\E[Z(\bm X,p)|\bm X\in B])|\bm X\in B]\\
        &= \frac{\int_B \E[\exp(t(Z(\bm x,p)-\E[Z(\bm X,p)|\bm X\in B] d\mu_{\bm X}(\bm x)}{\int_B d\mu_{\bm X}(\bm x)}\\
        &= \frac{\int_B \E[\exp(t(Z(\bm x,p)-\E[Z(\bm x,p)])] d\mu_{\bm X}(\bm x)}{\int_B d\mu_{\bm X}(\bm x)}\times \frac{\int_B\exp(t\E[Z(\bm x,p)])d\mu_{\bm X}(\bm x)}{\exp(t\E[Z(\bm X,p)|\bm X\in B])\int_Bd\mu_{\bm X}(\bm x)}\\
        &\le \frac{\int_B\exp(\sigma t^2)d\mu_{\bm X}(\bm x)}{\int_Bd\mu_{\bm X}(\bm x)}\times 1=\exp(\sigma t^2),
    \end{align*}
    where the last inequality is by the definition of conditional expectations. Hence the result is proved.
\end{proof}
\begin{proof}[Proof of Theorem~\ref{thm:upper-bound}:]
    According to the algorithm (Step~\ref{step:partition}), let $\mathcal P_t$ denote the partition formed by the bins at time $t$ when $\bm X_t$ is generated.
    The regret associated with $\bm X_t$ can be counted by bins $B\in\mathcal P_t$ into which $\bm X_t$ falls.
    Meanwhile, the level of $B$ is at most $K$. Therefore,
    \begin{align*}
        R_{\pi_{ABE}}(T)&=\E\left[\sum_{t=1}^T (f^{\ast}(\bm X_t)-f(\bm X_t,p_t))\right]
                  =\E\left[\sum_{t=1}^T\sum_{B\in \mathcal P_t}(f^{\ast}(\bm X_t)-f(\bm X_t,p_t))\I{\bm X_t\in B}\right]\\
                  &=\E\left[\sum_{t=1}^T\sum_{k=0}^K\sum_{\{B:l(B)=k\}}(f^{\ast}(\bm X_t)-f(\bm X_t,p_t))\I{\bm X_t\in B,B\in\mathcal P_t}\right]
    \end{align*}
    We will define the following random event for each bin $B$:
    \begin{equation*}
        E_B=\left\{p^{\ast}(B)\in\left[p_l^B,p_u^B\right]\right\}.
    \end{equation*}
    Recall that $p^{\ast}(B)$ is the unique maximizer for $f_B(p)=\E[f(\bm X,p)|\bm X\in B]$ by Assumption~\ref{asp:maximizer}; $[p_l^B,p_u^B]$ is the range of the decision set to explore for $B$.
    According to Step~\ref{step:child-interval} of the ABE algorithm, the interval $[p_l^B,p_u^B]$ is constructed around $p^{\ast}_{P(B)}$, the empirically-optimal price of the parent bin $P(B)$ that maximizes the empirical average $\bar Y_{P(B),j}$.

    We will decompose the regret depending on whether $E_B$ occurs.
    \begin{align}
        \E[R_{\pi_{ABE}}]&=\underbrace{\E\left[\sum_{t=1}^T\sum_{k=0}^K\sum_{\{B:l(B)=k\}}(f^{\ast}(\bm X_t)-f(\bm X_t,p_t))\I{\bm X_t\in B,B\in\mathcal P_t,E^c_B}\right]}_{\text{term 1}}\notag\\
                  &\quad +\underbrace{\E\left[\sum_{t=1}^T\sum_{k=0}^K\sum_{\{B:l(B)=k\}}(f^{\ast}(\bm X_t)-f(\bm X_t,p_t))\I{\bm X_t\in B,B\in\mathcal P_t,E_B}\right]}_{\text{term 2}}\label{eq:reg-decomp}
    \end{align}

    We first analyze term 1. Because $E_{B_{\emptyset}}$ is always true ($[p^{B_{\emptyset}}_l,p^{B_{\emptyset}}_u]=[0,1]$ always encloses $p^{\ast}(B_{\emptyset})$ according to Step~\ref{step:initialize}), we can find an ancestor of $B$, say $B_a$ (which can be $B$ itself), such that $E_{B_a}^c\cap E_{P(B_a)}\cap E_{P(P(B_a))}\cap\ldots\cap E_{B_{\emptyset}}$ occurs.
    In other words, up until $B_a$, the algorithm always correctly encloses the optimal price $p^{\ast}(P^{(k)}(B_a))$ of the ancestor bin of $B_a$ in the intervals $\left[p_l^{P^{(k)}(B_a)},p_u^{P^{(k)}(B_a)}\right]$.
    Therefore, when $E_B^c$ occurs,
    we can rearrange the event by such $B_a$.
    Term 1 in \eqref{eq:reg-decomp} can be bounded by
    \begin{align}
        &\E\left[\sum_{t=1}^T\sum_{k=0}^K\sum_{\{B:l(B)=k\}}(f^{\ast}(\bm X_t)-f(\bm X_t,p_t))\I{\bm X_t\in B,B\in\mathcal P_t,E^c_B}\right]\nonumber\\
        &\le \sum_{t=1}^T\sum_{k=0}^K\sum_{\{B:l(B)=k\}}M_1\PR(\bm X_t\in B,B\in\mathcal P_t,E^c_B)\nonumber\\
        &= \sum_{t=1}^T\sum_{k=1}^K\sum_{\{B:l(B)=k\}}M_1\PR\left(\bm X_t\in B,B\in\mathcal P_t,E_{B_a}^c\cap E_{P(B_a)}\cap E_{P(P(B_a))}\cap\ldots\cap E_{B_{\emptyset}}\right)\nonumber\\
        &\le \sum_{t=1}^T\sum_{k=1}^K\sum_{\{B:l(B)=k\}}M_1\PR(\bm X_t\in B,B\in\mathcal P_t,E_{B_a}^c\cap E_{P(B_a)})\nonumber\\
        &= \sum_{t=1}^T\sum_{k=1}^K\sum_{\{B:l(B)=k\}}\sum_{k'=1}^K\sum_{B':l(B')=k'}M_1\PR(\bm X_t\in B,B\in\mathcal P_t,B_a=B',E_{B'}^c\cap E_{P(B')})\label{eq:ancestor-bin}
    \end{align}
    The first inequality is due to Assumption~\ref{asp:lipschitz}.
    In the second inequality, we start enumerating from $k=1$ instead of $k=0$ because $E_{B_{\emptyset}}^c$ never occurs.
    In the last equality, we rearrange the probabilities by counting the deterministic bins $B'$ instead of the random bins $B_a$.

    Now note that $\left\{\bm X_t\in B,B\in\mathcal P_t,B_a=B',E_{B'}^c\cap E_{P(B')}\right\}$ are exclusive for different $B$s because $\mathcal P$ is a partition and $\bm X_t$ can only fall into one bin.
    Moreover, $\{\bm X_t\in B,B\in\mathcal P_t,B_a=B',E_{B'}^c\cap E_{P(B')}\}\subset\left\{\bm X_t\in B',E_{B'}^c\cap E_{P(B')}\right\}$ because $B\subset B_a$.
    Therefore,
    \begin{equation*}
        \sum_{k=1}^K\sum_{\{B:l(B)=k\}}\PR\left(\bm X_t\in B,B\in\mathcal P_t,B_a=B',E_{B'}^c\cap E_{P(B')}\right)\le \PR\left(\bm X_t\in B',E_{B'}^c\cap E_{P(B')}\right).
    \end{equation*}
    Thus, we can further simplify \eqref{eq:ancestor-bin}:
    \begin{align}
        \eqref{eq:ancestor-bin}&\le \sum_{t=1}^T\sum_{k=1}^K\sum_{\{B':l(B')=k\}}M_1\PR\left(\bm X_t\in B',E_{B'}^c\cap E_{P(B')}\right)\notag\\
        &= \sum_{t=1}^T\sum_{k=1}^K\sum_{\{B:l(B)=k\}}M_1\PR(\bm X_t\in B)\PR(E_{B}^c\cap E_{P(B)}).\label{eq:prob-bound1}
    \end{align}
    The last equality is because of the fact that for given $t$ and $B$, the event $\{\bm X_t\in B\}\in\sigma(X_t)$
    and $E_{B}^c\cap E_{P(B)}\in\mathcal F_{t-1}$.
    Therefore, the two events are independent.

    Next we analyze the event $E_{B}^c\cap E_{P(B)}$ given $l(B)=k$ in order to bound \eqref{eq:prob-bound1}.
    This event implies that when the parent bin $P(B)$ is created, its optimal price $p^{\ast}(P(B))$ is inside the interval $[p_l^{P(B)},p_u^{P(B)}]$.
    At the end of Step~\ref{step:enough-data}, when $n_{k-1}$ customers have been observed in $P(B)$, it is split into $2^d$ children.
    The optimal price of its child bin $B$, that is $p^{\ast}(B)$, is no longer inside $[p_l^{B},p_u^{B}]$.
    By Step~\ref{step:child-interval}, $p_l^B=\max\{p^{\ast}_{P(B)}-\Delta_k/2,0\}$ and $p_u^B=\min\{p^{\ast}_{P(B)}+\Delta_k/2,1\}$, where $p^{\ast}_{P(B)}$ is the empirically-optimal price for $P(B)$.
    Therefore, $E_B^c$ implies that $p^\ast(B)\notin [p^{\ast}_{P(B)}-\Delta_k/2,p^{\ast}_{P(B)}+\Delta_k/2]$.
    Combined with Assumption~\ref{asp:maximizer} part two, which states that $p^{\ast}(B)\in[\inf\{p^{\ast}(\bm x):\bm x\in B\},\sup\{p^{\ast}(\bm x):\bm x\in B\}]\subset \inf\{p^{\ast}(\bm x):\bm x\in P(B)\},\sup\{p^{\ast}(\bm x):\bm x\in P(B)\}$,
    we have
    \begin{equation*}
       [\inf\{p^{\ast}(\bm x):\bm x\in P(B)\},\sup\{p^{\ast}(\bm x):\bm x\in P(B)\}]\nsubset[p^{\ast}_{P(B)}-\Delta_k/2,p^{\ast}_{P(B)}+\Delta_k/2].
    \end{equation*}
    That is, either $\inf\{p^{\ast}(\bm x):\bm x\in P(B)\}<p^{\ast}_{P(B)}-\Delta_k/2$ or $\sup\{p^{\ast}(\bm x):\bm x\in P(B)\}>p^{\ast}_{P(B)}+\Delta_k/2$.
    By Assumption~\ref{asp:maximizer} part three, $\sup\{p^{\ast}(\bm x):\bm x\in P(B)\}-\inf\{p^{\ast}(\bm x):\bm x\in P(B)\}\le M_4d_{P(B)}=M_4 \sqrt{d}2^{-(k-1)}$ because the level of $P(B)$ is $k-1$.
    Hence by Assumption~\ref{asp:maximizer} part two, $\inf\{p^{\ast}(\bm x):\bm x\in P(B)\}\ge p^{\ast}(P(B))-M_4\sqrt{d}2^{-(k-1)}$ and $\sup\{p^{\ast}(\bm x):\bm x\in P(B)\}\le p^{\ast}(P(B))+M_4\sqrt{d}2^{-(k-1)}$.
    Combining the above observations, $E_{B}^c$ could only happen when $|p^{\ast}(P(B))-p^{\ast}_{P(B)}|>\Delta_k/2-M_4\sqrt{d}2^{-(k-1)}$.
    On the other hand, $E_{P(B)}$ implies that $p^{\ast}(P(B))\in [p_l^{P(B)},p_u^{P(B)}]$.
    Therefore, $E_{B}^c\cap E_{P(B)}$ could occur only if there exist two grid points $0\le j_1,j_2\le N_{k-1}-1$ in Step~\ref{step:enough-data} for bin $P(B)$, such that
    \begin{enumerate}
        \item The $j_2$th grid point is the closest to the optimal price for the bin $p^{\ast}(P(B))$. That is, $|p_l^{P(B)}+j_2\delta_{P(B)}-p^{\ast}(P(B))|\le \delta_{P(B)}/2$.
        \item The $j_1$th grid point maximizes $\bar Y_{P(B),j}$. That is $p_l^{P(B)}+j_1\delta_{P(B)}=P^{\ast}_{P(B)}$. It implies that $\bar Y_{P(B),j_1}\ge\bar Y_{P(B),j_2}$ in Step~\ref{step:empirical-mean}.
        \item $|p^{\ast}(P(B))-p^{\ast}_{P(B)}|>\Delta_k/2-M_4\sqrt{d}2^{-(k-1)}$.
    \end{enumerate}
    In other words, the empirically-optimal price is the $j_1$th grid point, while the $j_2$th grid point is closest to the true revenue maximizer in bin $P(B)$, i.e., $p^{\ast}(P(B))$.
    Given that the two grid points are far apart (by point 3 above), the probability of this event should be small.

    To further bound the probability,
    consider $\bar Y_{{P(B)},j}$.
    In Step~\ref{step:empirical-mean}, it is the sum of $\lfloor n_{k-1}/N_{k-1}\rfloor$ or $\lceil n_{k-1}/N_{k-1}\rceil$
    independent random variables with mean
    $\E[f(\bm X,p_l^{P(B)}+j\delta_{P(B)})|\bm X\in P(B)]$.
    By Lemmas~\ref{lem:bernoulli-subgaussian} and~\ref{lem:sub-g-int}, they are still sub-Gaussian with parameter $\sigma=1/8$.
    This gives the following probabilistic bound (recall the definition of $f_B(p)$ in Section~\ref{sec:assumption}):
    \begin{align*}
        \PR(E_{B}^c\cap E_{P(B)})&\le \PR(\bar Y_{P(B),j_1}\ge\bar Y_{P(B),j_2})\\
                                     &= \PR\big( \frac{1}{t_1}\sum_{i=1}^{t_1}X^{(1)}_i- \frac{1}{t_2}\sum_{i=1}^{t_2}X^{(2)}_i\ge f_{P(B)}(p_l^{P(B)}+j_2\delta_{P(B)})-f_{P(B)}(p_l^{P(B)}+j_1\delta_{P(B)})\big)\\
                                     &= \PR\big( \frac{1}{t_1}\sum_{i=1}^{t_1}X^{(1)}_i- \frac{1}{t_2}\sum_{i=1}^{t_2}X^{(2)}_i\ge f_{P(B)}(p_l^{P(B)}+j_2\delta_{P(B)})-f_{P(B)}(p^{\ast}(P(B)))\\
                                     &\quad\quad +f_{P(B)}(p^{\ast}(P(B)))-f_{P(B)}(p_l^{P(B)}+j_1\delta_{P(B)})\big)\\
                                     &\le \PR\big( \frac{1}{t_1}\sum_{i=1}^{t_1}X^{(1)}_i- \frac{1}{t_2}\sum_{i=1}^{t_2}X^{(2)}_i\ge M_2\left(p_l^{P(B)}+j_1\delta_{P(B)}-p^{\ast}(P(B)) \right)^2\\
                                     &\quad\quad-M_3(p_l^{P(B)}+j_2\delta_{P(B)}-p^{\ast}(P(B)))^2\big)\\
                                     &\le \PR\big( \frac{1}{t_1}\sum_{i=1}^{t_1}X^{(1)}_i- \frac{1}{t_2}\sum_{i=1}^{t_2}X^{(2)}_i\ge M_2\left(\left(\Delta_k/2-M_4\sqrt{d}2^{-(k-1)}\right)^+\right)^2\\
                                     &\quad\quad-M_3\delta_{P(B)}^2/4\big).
    \end{align*}
    Here $t_1$ and $t_2$ can be either $\lfloor n_{k-1}/N_{k-1}\rfloor$ or $\lceil n_{k-1}/N_{k-1}\rceil$;
    $X^{(1)}_{i}$ and $X^{(2)}_i$ are independent mean-zero sub-Gaussian random variables with parameter $\sigma$.
    Their averages are the centered version of $\bar Y_{P(B),j_1}$ and $\bar Y_{P(B),j_2}$, and thus their means are moved to the right-hand side.
    In the last inequality, $(\cdot)^+$ represents the positive part. The inequality follows from Assumption~\ref{asp:maximizer} part one and the previously derived facts that
    $|p_l^{P(B)}+j_2\delta_{P(B)}-p^{\ast}(P(B))|\le \delta_{P(B)}/2$ and  $|p_l^{P(B)}+j_1\delta_{P(B)}-p^{\ast}(P(B))|\ge \Delta_k/2-M_4\sqrt{d}2^{-(k-1)}$.
    By the property of sub-Gaussian random variables (for example, see Theorem 7.27 in \citealp{foucart2013mathematical}), the above probability is bounded by
    \begin{align*}
        \PR(E_{B}^c\cap E_{P(B)})&\le \exp\left(- \frac{\left(\left(M_2\left(\left(\Delta_k/2-M_4\sqrt{d}2^{-(k-1)}\right)^+\right)^2-M_3\delta_{P(B)}^2/4\right)^+\right)^2}{4\sigma(1/t_1+1/t_2)}\right)\\
                                     &\le \exp\left(- \frac{n_{k-1}\left(\left(M_2\left(\left(\Delta_k/2-M_4\sqrt{d}2^{-(k-1)}\right)^+\right)^2-M_3\delta_{P(B)}^2/4\right)^+\right)^2}{N_{k-1}+1}\right)
    \end{align*}
    By our choice of parameters, $\Delta_k=2^{-k}\log(T)$, $N_k\equiv \lceil\log(T)\rceil$, $\delta_{P(B)}\le \Delta_{k-1}/N_{k-1}\le 2^{-(k-1)}$.
    Therefore, when $T\ge \max\{\exp(8M_4\sqrt{d}),\exp(4\sqrt{2M_3/M_2})\}$, we have:
    \begin{gather*}
        \frac{\Delta_k}{4}-M_4\sqrt{d}2^{-(k-1)}= 2^{-(k+2)}\log(T)-M_4\sqrt{d}2^{-(k-1)}\ge 0\\
        \Rightarrow \frac{\Delta_k}{2}-M_4\sqrt{d}2^{-(k-1)}\ge\frac{\Delta_k}{4}\\
        \frac{M_2\Delta_k^2}{32}-\frac{M_3 \delta_{P(B)}^2}{4}\ge \frac{M_2 2^{-2k}\log^2(T)}{32}- M_3 2^{-2k}\ge 0\\
        \Rightarrow M_2\big(\Delta_k/2-M_4\sqrt{d}2^{-(k-1)}\big)^2-M_3\delta_{P(B)}^2/4\ge \frac{M_2\Delta_k^2}{16}- \frac{M_3\delta_{P(B)}^2}{4}\ge \frac{M_2\Delta_k^2}{32}.
    \end{gather*}
    Therefore, there exists a constant $c_1=M^2_2/1024$ such that
    \begin{align*}
        \PR(E_{B}^c\cap E_{P(B)})\le \exp\left(- c_1 \frac{\Delta_k^4 n_{k-1}}{\log(T)+1} \right).
    \end{align*}
    With this bound, we can proceed to provide an upper bound for \eqref{eq:prob-bound1}.
    Because $\sum_{\{B:l(B)=k\}}\PR(\bm X_t\in B)=1$, we have
    \begin{align}
        \sum_{t=1}^T\sum_{k=1}^K\sum_{\{B:l(B)=k\}}M_1\PR(\bm X_t\in B)\PR(E_{B}^c\cap E_{P(B)})\le M_1T\sum_{k=1}^K \exp\left(- c_1 \frac{\Delta_k^4 n_{k-1}}{\log(T)+1} \right).\label{eq:term-1}
    \end{align}

    We next analyze term 2 of \eqref{eq:reg-decomp}.
    By Assumption~\ref{asp:maximizer} part one, $f^{\ast}(\bm X_t)-f(\bm X_t,p_t)\le M_3(p^{\ast}(\bm X_t)-p_t)^2\le M_3(|p^{\ast}(B)-p_t|+|p^{\ast}(\bm X_t)-p^{\ast}(B)|)^2$.
    By the design of the algorithm (Step~\ref{step:policy} and \ref{step:policy-K}), $p_t\in [p_l^B,p_u^B]$; conditional on the event $E_B=\{p^{\ast}(B)\in[p_l^B,p_u^B]\}$, we have
    $|p^{\ast}(B)-p_t|\le p_u^B-p_l^B\le \Delta_k$ for $l(B)=k$.
    On the other hand, by Assumption~\ref{asp:maximizer} part two and three, $|p^{\ast}(\bm X_t)-p^{\ast}(B)|\le \sup\{p^{\ast}(\bm x):\bm x\in B\}-\inf\{p^{\ast}(\bm x):\bm x\in B\}\le M_4 d_B\le M_4\sqrt{d}2^{-k} $ for $l(B)=k$.
    Therefore, term 2 can be bounded by
    \begin{align}
        &\E\left[\sum_{t=1}^T\sum_{k=0}^K\sum_{\{B:l(B)=k\}}(f^{\ast}(\bm X_t)-f(\bm X_t,p_t))\I{\bm X_t\in B,B\in\mathcal P_t,E_B}\right]\nonumber\\
        &\le \E\left[\sum_{t=1}^T\sum_{k=0}^K\sum_{\{B:l(B)=k\}}M_3(\Delta_k+M_4\sqrt{d}2^{-k})^2\I{\bm X_t\in B,B\in\mathcal P_t,E_B}\right]\nonumber\\
        &\le \E\left[\sum_{k=0}^{K-1}\sum_{\{B:l(B)=k\}}\sum_{t=1}^TM_3(\Delta_k+M_4\sqrt{d}2^{-k})^2\I{\bm X_t\in B,B\in\mathcal P_t}\right]\label{eq:bin-sum}\\
        &\quad +\sum_{t=1}^TM_3(\Delta_K+M_4\sqrt{d}2^{-K})^2\sum_{\{B:l(B)=K\}}\PR(\bm X_t\in B)\nonumber
    \end{align}
    For the first term in \eqref{eq:bin-sum}, note that $\{X_t\in B,B\in \mathcal P_t\}$ occurs for at most $n_k$ times for given $B$ with $l(B)=k$. Moreover, there are $2^{dk}$ bins with level $k$, i.e., $\#\{B:l(B)=k\}=2^{dk}$.
    Therefore, substituting $\Delta_k = 2^{-k}\log(T)$ into the first term yields an upper bound $\sum_{k=0}^{K-1} M_3n_k2^{(d-2)k}(\log(T)+M_4\sqrt{d})^2$.
    For the second term in \eqref{eq:bin-sum}, $\sum_{\{B:l(B)=K\}}\PR(\bm X_t\in B)=1$ because $\{B:l(B)=K\}$ form a partition of the covariate space and $\bm X$ always falls into one of the bins.
    Therefore, \eqref{eq:bin-sum} is bounded by
    \begin{align}
        &M_3\left(\sum_{k=0}^{K-1} n_k2^{(d-2)k}+ T2^{-2K}\right)\left(\log(T)+M_4\sqrt{d}\right)^2\log(T)^2\notag\\
        \le &c_3 \log(T)^2M_3\left(\sum_{k=0}^{K-1} n_k2^{(d-2)k}+ T2^{-2K}\right)\label{eq:term-2}
    \end{align}
    for $c_3=(1+M_3)\left(\log(2)+M_4\sqrt{d}\right)^2/\log(2)^2$ and $T\ge 2$.

    Combining \eqref{eq:term-1} and \eqref{eq:term-2}, we can find constants $c_2=c_1/2^5=M^2_2/2^{15}$ such that
    \begin{align}
        \E[R_{\pi_{ABE}}]&\le \sum_{k=0}^{K-1} \left(c_3\log(T)^2n_k2^{(d-2)k}+M_1 T\exp(- c_1 2^{-4k-4}\log^3(T) n_{k}/2)\right)+c_3\log(T)^2T2^{-2K}\notag\\
        &\le \sum_{k=0}^{K-1} \left(c_3\log(T)^2n_k2^{(d-2)k}+M_1 T\exp(- c_2 2^{-4k}\log^3(T) n_{k})\right)+c_3\log(T)^2T2^{-2K} \label{eq:min-lk}
    \end{align}
    We choose $n_k$
    \begin{equation*}
        n_k = \max\left\{0, \left\lceil \frac{ 2^{4k+15}}{M^2_2\log(T)^3}(\log(T)+\log(\log(T))-(d+2)k\log(2))\right\rceil\right\}
    \end{equation*}
    to minimize $c_3\log(T)^2n_k2^{(d-2)k}+M_1 T\exp(- c_2 2^{-4k}\log(T)^3 n_{k})$ in \eqref{eq:min-lk}.
    More precisely,
    \begin{align*}
        c_3\log(T)^2n_k2^{(d-2)k}&\le c_3M_2^{-2}2^{15}\frac{\log(T)^2}{\log(T)^3}2^{(d+2)k}(\log(T)+\log(\log(T)))\\
                                    &\le c_42^{(d+2)k}
    \end{align*}
    for some constants $c_4>0$, and
    \begin{align*}
        M_1 T\exp(- c_2 2^{-4k}\log^3(T) n_{k})&\le M_1 T \exp(-\log(T)-\log(\log(T))+(d+2)k\log(2))\\
                                               &\le c_5 2^{(d+2)k}
    \end{align*}
    for a constant $c_5>0$.
    Therefore, \eqref{eq:min-lk} implies that we can find a constant $c_6=c_4+c_5$ such that
    \begin{align*}
        \E[R_{\pi_{ABE}}]&\le \sum_{k=0}^{K-1} c_6 2^{(d+2)k}+c_3\log(T)^2T2^{-2K}\\
        &\le c_6 2^{(d+2)K}+c_3\log(T)^2T2^{-2K}.
    \end{align*}
    Therefore, by our choice of $K=\lfloor \frac{\log(T)}{(d+4)\log (2)}\rfloor$, the regret is bounded by
    \begin{align*}
        c_7 \log^2(T)T^{ \frac{d+2}{d+4}}.
    \end{align*}
    for some constant $c_7$. Hence we have completed the proof.
\end{proof}

\subsection{Lower Bound in Section~\ref{sec:lower-bound}}
Next we show that Assumption 1 and 2 are satisfied by the construction in Section~\ref{sec:lower-bound}.
\begin{proposition}\label{prop:asp-satisfied}
    The choice of $f\in\mathcal C$ satisfies Assumption~\ref{asp:lipschitz} and~\ref{asp:maximizer} with $M_1=4$, $M_2=1$, $M_3=2$, and $M_4=1$.
\end{proposition}

To give some intuitions, note that by the construction of $f$, both $f_w(\bm x,p)$ and $p^{\ast}(\bm x)$ are Lipschitz continuous in $[0,1)^d$.
Such continuity guarantees the desired properties.

\begin{proof}[Proof of Proposition~\ref{prop:asp-satisfied}:]
    For Assumption~\ref{asp:lipschitz}, we discuss two cases. The first case is $\bm x_1,\bm x_2\in B_j$, i.e., the two customers are in the same bin.
    In this case,
    \begin{align*}
        |f_w(\bm x_1,p_1)-f_w(\bm x_2,p_2)|\le \begin{cases}
            |p_1(\frac{2}{3}- \frac{p_1}{2})-p_2( \frac{2}{3}- \frac{p_2}{2})|\le 2|p_1-p_2| & w_j=0\\
            2|p_1-p_2|+ |p_1( \frac{1}{3}- \frac{p_1}{2})D(\bm x_1,\partial B_j)-p_2 ( \frac{1}{3}- \frac{p_2}{2})D(\bm x_2,\partial B_j)|& w_j=1
        \end{cases}
    \end{align*}
    When $w_j=0$, the assumption is already satisfied.
    When $w_j=1$, by the triangle inequality we have
    \begin{align*}
        &|p_1( \frac{1}{3}- \frac{p_1}{2})D(\bm x_1,\partial B_j)-p_2 ( \frac{1}{3}- \frac{p_2}{2})D(\bm x_2,\partial B_j)|\\
        \le &2|p_1-p_2|D(\bm x_1,\partial B_j)+p_2\left( \frac{1}{3}- \frac{p_2}{2}\right)|D(\bm x_1,\partial B_j)-D(\bm x_2,\partial B_j)|\\
                                                               \le &\frac{1}{M}|p_1-p_2|+|D(\bm x_1,\partial B_j)-D(\bm x_2,\partial B_j)|\\
                                                               \le &|p_1-p_2|+\|\bm x_1-\bm x_2\|_2
    \end{align*}
    The second inequality is because $p_2\ge 0$ and $D(\bm x_1,\partial B_j)\le 1/2M\le 1/2$ when $\bm x_1\in B_j$.
    The third inequality is because
    \begin{align*}
        \|\bm x_1-\bm x_2\|_2+D(\bm x_2,\partial B_j)&=\min_{a\in \partial B_j}\{\|a-\bm x_2\|_2+\|\bm x_1-\bm x_2\|_2\}\ge \min_{\bm a\in \partial B_j}\{\|a-\bm x_1\|_2\}\\
        &=D(\bm x_1,\partial B_j)
    \end{align*}
    and similarly $\|\bm x_1-\bm x_2\|_2+D(\bm x_1,\partial B_j)\ge D(\bm x_2,\partial B_j)$.
    Therefore, we have shown that $|f_w(\bm x_1,p_1)-f_w(\bm x_2,p_2)|\le 3|p_1-p_2|+2\|\bm x_1-\bm x_2\|_2$ for case one.

    The second case is $\bm x_1\in B_{j_1}$ and $\bm x_2\in B_{j_2}$ for $j_1\neq j_2$.
    If $j_1=j_2=0$, then by the previous analysis, we already have $|f_w(\bm x_1,p_1)-f_w(\bm x_2,p_2)|\le 2|p_1-p_2|$.
    If $j_1=1$ and $j_2=0$, then
    \begin{align*}
        |f_w(\bm x_1,p_1)-f_w(\bm x_2,p_2)|\le 2|p_1-p_2|+p_1\left( \frac{1}{3}- \frac{p_1}{2}\right)D(\bm x_1,\partial B_{j_1})\le2|p_1-p_2|+2\|\bm x_1-\bm x_2\|_2.
    \end{align*}
    The last inequality is because the straight line connecting $\bm x_1$ and $\bm x_2$ must intersects $B_{j_1}$. The distance from $\bm x_1$ to the intersection is no less than $D(\bm x_1,\partial B_{j_1})$. Therefore, $D(\bm x_1,\partial B_{j_1})\le \|\bm x_1-\bm x_2\|_2$.
    If $j_1=0$ and $j_2=1$, then the result follows similarly.
    If $j_1=j_2=1$, then by the same argument,
    \begin{align*}
        |p_1\left( \frac{1}{3}- \frac{p_1}{2}\right)D(\bm x_1,\partial B_{j_1})-p_2\left(  \frac{1}{3}- \frac{p_2}{2}\right)D(\bm x_2,\partial B_{j_2})|\le D(\bm x_1,\partial B_{j_1})+D(\bm x_2,\partial B_{j_2})\le 2\|\bm x_1-\bm x_2\|_2.
    \end{align*}
    Therefore, combining both cases, we always have
    \begin{align*}
        |f_w(\bm x_1,p_1)-f_w(\bm x_2,p_2)|\le 4|p_1-p_2|+4\|\bm x_1-\bm x_2\|_2.
    \end{align*}

    For Assumption~\ref{asp:maximizer}, note that
    \begin{align*}
        f_B(p)&=p\left( \frac{2}{3}- \frac{p}{2}\right)+p\left( \frac{1}{3}- \frac{p}{2}\right)\E\left[\sum_{j=1}^{M^d} w_j \I{\bm X\in B_j}D(\bm X,\partial B_j)|\bm X\in B\right]\\
              &= p\left( \frac{2}{3}- \frac{p}{2}\right)+p\left( \frac{1}{3}- \frac{p}{2}\right)\sum_{j=1}^{M^d}w_j\PR(B_j\cap B)\E[D(\bm X,\partial B_j)|\bm X\in B_j\cap B].
    \end{align*}
    For part one, because the second-order derivative of $f_B(p)$ is always bounded between $[-2,-1]$,
    \begin{align*}
        2(p-p^{\ast}(B))^2\ge f_B(p^{\ast}(B))-f_B(p)\ge (p-p^{\ast}(B))^2
    \end{align*}
    and part one holds for $M_2=1$, $M_3=2$.
    For part two, note that the maximizer
    \begin{align*}
    p^{\ast}(B)= \frac{2+\sum_{j=1}^{M^d}w_j\PR(B_j\cap B)\E[D(\bm X,\partial B_j)|\bm X\in B_j\cap B]}{3(1+\sum_{j=1}^{M^d}w_j\PR(B_j\cap B)\E[D(\bm X,\partial B_j)|\bm X\in B_j\cap B])}.
    \end{align*}
    Because $p^{\ast}(B)$ is a monotone function of $\sum_{j=1}^{M^d}w_j\PR(B_j\cap B)\E[D(\bm X,\partial B_j)|\bm X\in B_j\cap B]$,
    it is easy to check that part two of Assumption~\ref{asp:maximizer} holds.
    For part three, consider $\bm x_1\in B\cap B_{j_1}$ and $\bm x_2\in B\cap B_{j_2}$.
    If $w_{j_1}=0$ and $w_{j_2}=0$, then $p^{\ast}(\bm x_1)- p^{\ast}(\bm x_2)=0$.
    If either $w_{j_1}=0$ or $w_{j_2}=0$, then
    \begin{equation*}
        |p^{\ast}(\bm x_1)- p^{\ast}(\bm x_2)|  \le \frac{1}{3}\max\left\{D(\bm x_1,\partial B_{j_1}),D(\bm x_2,\partial B_{j_2})\right\}\le \|\bm x_1-\bm x_2\|_2\le d_B.
    \end{equation*}
    by the previous analysis.
    If $w_{j_1}=1$ and $w_{j_2}=1$, then similarly we have $|p^{\ast}(\bm x_1)- p^{\ast}(\bm x_2)|\le \frac{1}{3}|D(\bm x_1,\partial B_{j_1})-D(\bm x_2,\partial B_{j_2})|\le \|\bm x_1-\bm x_2\|_2\le d_B$.
    Therefore, part three holds with $M_4=1$.
\end{proof}

The proof of Theorem~\ref{thm:lower-bound} uses Kullback-Leibler (KL) divergence to measure the ``distinguishability'' of the underlying functions.
Such information-theoretic approach has been a standard technique in the learning literature.
The proof is outlined in the following.

Among all functions $f\in \mathcal C$, we focus on each pair of $f_w$ and $f_{w'}$ that only differ in a single bin.
For example, consider $w=(w_1,w_2,\ldots,w_{j-1},0,w_{j+1},\ldots,w_{M^d})$ and $w'=(w_1,w_2,\ldots,w_{j-1},1,w_{j+1},\ldots,w_{M^d})$ for some $j$.
Because the indices of $w$ and $w'$ are identical except for the $j$th, $f_w$ and $f_{w'}$ only differ in bin $B_j$.
Denote $w=(w_{-j},0)$ and $w'=(w_{-j},1)$ to highlight this fact.
Distinguishing between $f_{w_{-j},0}$ and $f_{w_{-j},1}$ poses a challenge to any policy.
In particular, for $\bm x\in B_j$, the difference of the two functions $|f_{w_{-j},0}(\bm x, p)-f_{w_{-j},1}(\bm x,p)|=p( \frac{1}{3}- \frac{p}{2})D(\bm x,\partial B_j)$ is diminishing when in $p\approx 2/3$.
Thus, charging a price different from $2/3$ makes the difference more visible and helps to distinguish $f_{w_{-j},0}$ and $f_{w_{-j},1}$.
However, if $p$ deviates too much from the optimal decision $p^{\ast}(\bm x)=2/3$ or $p^{\ast}(\bm x)=(2+D(\bm x,\partial B_j))/(3(1+D(\bm x,\partial B_j)))$,
then significant regret is incurred in that period.

To capture this trade-off, for a given $j=1,\dots,M^d$ and $w_{-j}\in \{0,1\}^{M^d-1}$, define the following quantity
\begin{equation}\label{eq:def-z}
    z_{w_{-j}} =\sum_{t=1}^T\frac{9}{11M^2}\E^{\pi}_{f_{w_{-j},0}}\left[\left( \frac{2}{3}-p_t\right)^2\I{\bm X_t\in B_j}\right]
\end{equation}
where the expectation is taken with respect to a policy $\pi$ and the underlying function $f_{w_{-j},0}$.
This quantity is crucial in analyzing the regret.
More precisely, if $z_{w_{-j}}$ is large (which implies that $p_t$ is large), then $f_{w_{-j},0}$ and $f_{w_{-j},1}$ are easy to distinguish but the regret becomes uncontrollable.
\begin{lemma}\label{lem:z-big}
    \begin{equation*}
       \sup_{f\in \mathcal C} R_{\pi}\ge \frac{11M_2}{9\times2^{M^d}}M^2 \sum_{j=1}^{M^d}\sum_{w_{-j}} z_{w_{-j}}.
    \end{equation*}
\end{lemma}
On the other hand, if $z_{w_{-j}}$ is small, then the KL divergence of the measures associated with $f_{w_{-j},0}$ and $f_{w_{-j},1}$ is also small.
In other words, the firm cannot easily distinguish between $f_{w_{-j},0}$ and $f_{w_{-j},1}$ which impedes learning and incurs substantial regret.
\begin{lemma}\label{lem:z-small}
    \begin{equation*}
        \sup_{f\in \mathcal C}R_{\pi}\ge \frac{M_2T}{9\times 2^{M^d+11}M^{d+2}} \sum_{j=1}^{M^d}\sum_{w_{-j}} \exp(-z_{w_{-j}}).
    \end{equation*}
\end{lemma}
Since the effects of $z_{w_{-j}}$ are opposite in Lemma~\ref{lem:z-big} and Lemma~\ref{lem:z-small}, combining the two bounds, we can find a positive constant $c_1$ independent of $T$ and $M$ so that
\begin{align*}
    R_{\pi}& \ge \frac{c_1}{2^{M^d}}\sum_{j=1}^{M^d}\sum_{w_{-j}}\left( \frac{T }{M^{d+2}}\exp\left(-z_{w_{-j}}\right)+ M^2 z_{w_{-j}}\right)\\
                                                                                       &\ge \frac{c_1}{2^{M^d}}\sum_{j=1}^{M^d}\sum_{w_{-j}}M^2\left(1+\log\left( \frac{T }{M^{d+4}}\right)\right)\\
                                                                                       &\ge \frac{c_1M^{d+2}}{2}\left(1+\log\left( \frac{T }{M^{d+4}}\right)\right).
\end{align*}
In the second inequality above, we minimize the expression over positive ${z_{w_{-j}}}$.
Since $M$ can be an arbitrary positive integer, we let $M=\lceil T^{1/(d+4)}\rceil$ in the last quantity.
Calculation shows that it is lower bounded by $c T^{(2+d)/(4+d)}$ for a constant $c>0$.

To prove Theorem~\ref{thm:lower-bound}, i.e., Lemma~\ref{lem:z-big} and Lemma~\ref{lem:z-small}, we introduce the following lemmas.

\begin{lemma}[KL divergence for Bernoulli Random Variables]\label{lem:normal-kl}
    For two Bernoulli random variables $X_1$ and $X_2$ with means $\theta_1$ and $\theta_2$, we have
    \begin{equation*}
        \mathcal K(\mu_{X_1},\mu_{X_2})\le \frac{(\theta_1-\theta_2)^2}{\theta_2(1-\theta_2)}.
    \end{equation*}
\end{lemma}
\begin{proof}
    The proof can be found in, e.g., \citet{rigollet2010nonparametric} and thus omitted.
\end{proof}

\begin{lemma}[The chain rule of the KL divergence]\label{lem:kl-chain}
    Given joint distributions $p(x,y)$ and $q(x,y)$, we have
    \begin{equation*}
        \mathcal K(p(x,y),q(x,y))=\mathcal K(p(x),q(x))+ \E_{p(x)}[\mathcal K(p(y|x),q(y|x)],
    \end{equation*}
    where $p(\cdot)$ and $q(\cdot)$ represent the marginal distribution, $p(\cdot|x)$ and $q(\cdot|x)$ represent the conditional distribution.
\end{lemma}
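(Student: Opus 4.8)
The plan is to derive the identity directly from the definition of the KL divergence, exploiting the factorizations $p(x,y)=p(x)\,p(y|x)$ and $q(x,y)=q(x)\,q(y|x)$. Throughout I assume $p(x,y)\ll q(x,y)$, since otherwise both sides equal $+\infty$; this in turn forces $p(x)\ll q(x)$ and, for $p$-almost every $x$, $p(y|x)\ll q(y|x)$. Starting from
\begin{equation*}
    \mathcal K(p(x,y),q(x,y))=\int\!\int \log\frac{p(x,y)}{q(x,y)}\,p(x,y)\,dy\,dx,
\end{equation*}
I would substitute the factorizations and split the logarithm as $\log\frac{p(x)}{q(x)}+\log\frac{p(y|x)}{q(y|x)}$, writing the integral as a sum of two terms.

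For the first term, $\log\frac{p(x)}{q(x)}$ does not depend on $y$, so integrating $p(x,y)\,dy$ over $y$ collapses it to $p(x)$, leaving exactly $\int\log\frac{p(x)}{q(x)}\,p(x)\,dx=\mathcal K(p(x),q(x))$. For the second term, I would integrate over $y$ first for fixed $x$, giving $\int\log\frac{p(y|x)}{q(y|x)}\,p(y|x)\,dy=\mathcal K(p(y|x),q(y|x))$, and then integrate this over $x$ against $p(x)$, which is by definition $\E_{p(x)}[\mathcal K(p(y|x),q(y|x))]$. Adding the two pieces gives the claim.

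The one point requiring care --- and the step I would flag as the main technical obstacle --- is the use of Fubini/Tonelli to decompose the double integral and to evaluate the inner $y$-integrals separately. This is justified by restricting to the region where the relevant densities are mutually absolutely continuous and splitting the integrand into its positive and negative parts (or, more simply, by noting that finiteness of $\mathcal K(p(x,y),q(x,y))$ makes the integrand $p(x,y)$-integrable, while in the divergent case the identity holds as $+\infty=+\infty$). In all applications in this paper the measures involved are products of a uniform law on a bin with a Gaussian (see the construction preceding Proposition~\ref{prop:asp-satisfied}), so these regularity conditions are automatic and the manipulations above are rigorous.
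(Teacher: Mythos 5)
Your proof is correct and is precisely the standard textbook argument; the paper itself omits the proof of this lemma, stating only that it ``can be found from standard textbooks.'' Your factorize-split-and-integrate derivation, together with the remark on absolute continuity and the justification of the interchange of integrals, supplies exactly what the paper defers to the references.
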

\begin{proof}
    The proof can be found from standard textbooks and is thus omitted.
\end{proof}

\begin{proof}[Proof of Lemma~\ref{lem:z-big}:]
We use $\E_f^{\pi}$ to highlight the dependence of the expectation on the policy $\pi$ and the underlying function $f$.
Note that
\begin{align*}
    \sup_{f\in\mathcal C}R_\pi=\sup_{f\in\mathcal C} \sum_{t=1}^T\E\left[f^{\ast}(\bm X_t)-f(\bm X_t,p_t)\right]&=\sup_{f\in\mathcal C} \sum_{t=1}^T\sum_{j=1}^{M^d}\E\left[(f^{\ast}(\bm X_t)-f(\bm X_t,p_t))\I{\bm X_t\in B_j}\right]\\
                                                                                       &\ge M_2\sup_{f\in \mathcal C}\sum_{t=1}^T\sum_{j=1}^{M^d}\E\left[(p^{\ast}(\bm X_t)-p_t)^2\I{\bm X_t\in B_j}\right]\\
                                                                                       &\ge \frac{M_2}{2^{M^d}}\sum_{w}\sum_{t=1}^T\sum_{j=1}^{M^d}\E^\pi_{f_w}\left[(p^{\ast}(\bm X_t)-p_t)^2\I{\bm X_t\in B_j}\right].
\end{align*}
In the last inequality, we have used the fact that $\#\{\mathcal C\}=2^{M^d}$ and the supremum is always no less than the average.

For a given bin $B_j$, we focus on $f_{w_{-j},0}$ and $f_{w_{-j},1}$, which only differ for  $\bm x\in B_j$.
Therefore, we can rearrange $\sum_{w}$ to $\sum_{w_{-j}\in \{0,1\}^{M^d-1}}\sum_{w_j\in\{0,1\}}$.
We have the following lower bound for the regret
\begin{align*}
    \sup_{f\in\mathcal C} R_{\pi}
    &\ge \frac{M_2}{2^{M^d}}\sum_{j=1}^{M^d}\sum_{w_{-j}\in\left\{0,1\right\}^{M^d-1}}\sum_{w_j\in\{0,1\}}\sum_{t=1}^T\E^\pi_{f_{w_{-j},w_j}}\left[(p^{\ast}(\bm X_t)-p_t)^2\I{\bm X_t\in B_j}\right]\\
    &\ge \frac{M_2}{2^{M^d}}\sum_{j=1}^{M^d}\sum_{w_{-j}}\sum_{t=1}^T\E^\pi_{f_{w_{-j},0}}\left[(p^{\ast}(\bm X_t)-p_t)^2\I{\bm X_t\in B_j}\right]\\
    &= \frac{M_2}{2^{M^d}}\sum_{j=1}^{M^d}\sum_{w_{-j}}\sum_{t=1}^T\E^\pi_{f_{w_{-j},0}}\left[\left( \frac{2}{3}-p_t\right)^2\I{\bm X_t\in B_j}\right]\\
    &= \frac{11M_2}{9\times 2^{M^d}}M^2\sum_{j=1}^{M^d}\sum_{w_{-j}}z_{w_{-j}}.
\end{align*}
In the second inequality, we have neglected the regret for $f_{w_{-j},1}$.
The last equality is by the definition of $z_{w_{-j}}$ in \eqref{eq:def-z}.
Hence we have proved the result.
\end{proof}

\begin{proof}[Proof of Lemma~\ref{lem:z-small}:]
    By the same argument as in the proof of Lemma~\ref{lem:z-big}, we have
    \begin{equation*}
        \sup_{f\in\mathcal C} R_{\pi}
    \ge \frac{M_2}{2^{M^d}}\sum_{j=1}^{M^d}\sum_{t=1}^T\sum_{w_{-j}\in\left\{0,1\right\}^{M^d-1}}\sum_{w_j\in\{0,1\}}\E^\pi_{f_{w_{-j},w_j}}\left[(p^{\ast}(\bm X_t)-p_t)^2\I{\bm X_t\in B_j}\right].
    \end{equation*}
    Because $\bm X_t$ is uniformly distributed in $[0,1)^d$, $\PR(\bm X_t\in B_{j})=M^{-d}$.
    By conditioning on the event $\bm X_t\in B_j$, we have
    \begin{align*}
        \E^{\pi}_{f_{w_{-j},w_j}}\left[(p^{\ast}(\bm X_t)-p_t)^2\I{\bm X_t\in B_j}\right]&=\E^{\pi}_{f_{w_{-j},w_j}}\left[(p^{\ast}(\bm X_t)-p_t)^2|\bm X_t\in B_j\right]\PR(\bm X_t\in B_j)\\
                                                                                           &= \frac{1}{M^d}\E^{\pi}_{f_{w_{-j},w_j}}\left[(p^{\ast}(\bm X_t)-p_t)^2|\bm X_t\in B_j\right].
    \end{align*}
    Since $(p^{\ast}(\bm X_t)-p_t)^2$ is measurable with respect to the $\sigma$-algebra generated by $\mathcal F_{t-1}$ and $\bm X_t$, by the tower property, we have
    \begin{align*}
        \E^{\pi}_{f_{w_{-j},w_j}}\left[(p^{\ast}(\bm X_t)-p_t)^2\I{\bm X_t\in B_j}\right]&= \frac{1}{M^d}\E^{\pi}_{f_{w_{-j},w_j}}\left[\E\left[(p^{\ast}(\bm X_t)-p_t)^2|\mathcal F_{t-1},\bm X_t\in B_j\right]\right]
    \end{align*}
    Let $\E^{\pi,t-1}_{f_{w_{-j},w_j}}[\cdot]$ denote $\E^{\pi}_{f_{w_{-j},w_j}}\left[\E[\cdot |\mathcal F_{t-1}]\right]$ and let $\PR_{\bm X_t}^{B,t-1}(\cdot)$ denote the conditional probability $\PR(\cdot|\mathcal F_{t-1},\bm X_t\in B)$.
    By Markov's inequality, for any constant $s>0$ we have
    \begin{align}
        &\sum_{w_j\in\{0,1\}} \E^\pi_{f_{w_{-j},w_j}}\left[(p^{\ast}(\bm X_t)-p_t)^2\I{\bm X_t\in B_j}\right]\label{eq:lower-bound-decomp}\\
        &= \frac{1}{M^d}\sum_{w_j\in\left\{0,1\right\}} \E^{\pi}_{f_{w_{-j},w_j}}\left[\E\left[(p^{\ast}(\bm X_t)-p_t)^2|\bm X_t\in B_j,\mathcal F_{t-1}\right]\right]\notag\\
        &\ge \frac{1}{M^d}\sum_{w_j\in\left\{0,1\right\}} \frac{s^2}{M^2} \E^{\pi}_{f_{w_{-j},w_j}}\left[\PR_{\bm X_t}^{B_j,t-1}\left(\big|p^{\ast}(\bm X_t)-p_t\big|\ge \frac{s}{M}\right)\right]\notag\\
        &= \frac{s^2}{M^{d+2}} \left(\E^{\pi}_{f_{w_{-j},0}}\left[ \PR_{\bm X_t}^{B_j,t-1}\left(| \frac{2}{3}-p_t|\ge \frac{s}{M}\right)\right]+\E^{\pi}_{f_{w_{-j},1}}\left[ \PR_{\bm X_t}^{B_j,t-1}\left(|
        \frac{2+D(\bm X_t,\partial B_j)}{3(1+D(\bm X_t,\partial B_j))}-p_t|\ge \frac{s}{M}\right)\right]\right)\notag\\
        &\ge \frac{s^2}{M^{d+2}} \bigg(\E^{\pi}_{f_{w_{-j},0}}\left[ \PR_{\bm X_t}^{B_j,t-1}\left(| \frac{2}{3}-p_t|\ge \frac{s}{M},A\right)\right]\\
        &\quad +\E^{\pi}_{f_{w_{-j},1}}\left[ \PR_{\bm X_t}^{B_j,t-1}\left(| \frac{2+D(\bm X_t,\partial B_j)}{3(1+D(\bm X_t,\partial B_j))}-p_t|\ge \frac{s}{M},A\right)\right]\bigg)\notag
    \end{align}
    where we define event $A=\{\bm X_t\in B_j\}\cap\{D(\bm X_t,\partial B_j)>12s/M\}$.
    In the second equality, we have used the fact that for $\bm X_t\in B_j$, when $w_j=0$, $p^{\ast}(\bm X_t)= \frac{2}{3}$; when $w_j=1$, $p^{\ast}(\bm X_t)=\frac{2+D(\bm X_t,\partial B_j)}{3(1+D(\bm X_t,\partial B_j))}$.
    The motivation of introducing $A$ is as follows: consider the classification rule $\Pi_t\mapsto \{0,1\}$ associated with $p_t$ tries to distinguish between $w_j=0$ and $w_j=1$.
    It is defined as
    \begin{equation*}
        \Pi_t = \begin{cases}
            0 & | \frac{2}{3}-p_t|\le |\frac{2+D(\bm X_t,\partial B_j)}{3(1+D(\bm X_t,\partial B_j))}-p_t|\\
            1 & \text{otherwise}.
        \end{cases}
    \end{equation*}
    In other words, $\Pi_t$ classifies the underlying function as $f_{w_{-j},0}$ if $p_t$ is closer to the optimal price $p^{\ast}(\bm X_t)$ of $f_{w_{-j},0}$, and as $f_{w_{-j},1}$ vice versa.
    For $f_{w_{-j},0}$, a misclassification on the event $A$ is $A\cap\{\Pi_t=1\}$.
    It implies that
    \begin{align*}
        |2/3-p_t| & \ge \frac{1}{2}\times \left| \frac{2}{3}- \frac{2+D(\bm X_t,\partial B_j)}{3(1+D(\bm X_t,\partial B_j))}\right| = \frac{D(\bm X_t,\partial B_j)}{6(1+D(\bm X_t,\partial B_j))},
    \end{align*}
    which on $A$, implies $A\cap \{|2/3-p_t|\ge s/M\}$ as $D(\bm X_t,\partial B_j)\le 1$.
    Similarly, $A\cap\{\Pi_t=0\}\subset A\cap \{|\frac{2+D(\bm X_t,\partial B_j)}{3(1+D(\bm X_t,\partial B_j))}-p_t|\ge s/M\}$.
    Therefore, by the fact that $\PR(\bm X\in A)=(1-24s)^d/M^d$, we have
    \begin{align}
        &\E^{\pi}_{f_{w_{-j},0}}\left[ \PR_{\bm X_t}^{B_j,t-1}\left(|1-p_t|\ge \frac{s}{M},A\right)\right]+\E^{\pi}_{f_{w_{-j},1}}\left[ \PR_{\bm X_t}^{B_j,t-1}\left(|1-D(\bm X_t,\partial B_j)-p_t|\ge \frac{s}{M},A\right)\right]\notag\\
        &\ge\E^{\pi}_{f_{w_{-j},0}}\left[ \PR_{\bm X_t}^{B_j,t-1}\left(A\cap \{\Pi_t=1\}\right)\right]+\E^{\pi}_{f_{w_{-j},1}}\left[ \PR_{\bm X_t}^{B_j,t-1}\left(A\cap\{\Pi_t=0\}\right)\right]\notag\\
        &= (1-24s)^d\left( \PR_{f_{w_{-j},0}}^{\pi}\left(\Pi_t=1|\bm X_t\in A\right)+\PR_{f_{w_{-j},1}}^{\pi}\left(\Pi_t=0|\bm X_t\in A\right)\right)\label{eq:mis-error}.
    \end{align}
    Next we lower bound the misclassification error \eqref{eq:mis-error} by the
    Kullback-Leibler (KL) divergence between the two probability measures associated with $f_{w_{-j},0}$ and $f_{w_{-j},1}$.
    Intuitively, if the two probability measures are close, then no classification (including $\Pi_t$) can incur very small misclassification error.
    Formally, introduce the KL divergence between two probability measures $P$ and $Q$ as
    \begin{equation*}
        \mathcal K(P,Q)= \begin{cases}
            \int \log \frac{dP}{dQ}dP& \text{if }P\ll Q\\
            +\infty &\text{otherwise}
        \end{cases},
    \end{equation*}
    where $P\ll Q$ indicates that $P$ is absolute continuous w.r.t. $Q$.
    By the independence of $\mathcal F_{t-1}$ and $\bm X_t$, the two measures we want to distinguish in \eqref{eq:mis-error}, $\mu_{f_{w_{-j},0}}^{\pi}(\cdot|\bm X_t\in A)$ and $\mu_{f_{w_{-j},1}}^{\pi}(\cdot|\bm X_t\in A)$, can be expressed as product measures
    \begin{align*}
        \mu_{f_{w_{-j},0}}^{\pi}(\cdot|\bm X_t\in A)&=\mu_{f_{w_{-j},0}}^{\pi,t-1}(\cdot)\times \mu_{\bm X_t}^{A}(\cdot)\\
        \mu_{f_{w_{-j},1}}^{\pi}(\cdot|\bm X_t\in A)&=\mu_{f_{w_{-j},1}}^{\pi,t-1}(\cdot)\times \mu_{\bm X_t}^{A}(\cdot),
    \end{align*}
    where $\mu_{f_{w_{-j},0}}^{\pi,t-1}(\cdot)$ is a measure of $(\bm X_1,Z_1,\dots,\bm X_{t-1},Z_{t-1})$ depending on $\pi$ and $f_{w_{-j},0}$ and $\mu_{\bm X_t}^{A}(\cdot)$ is a measure of $\bm X_t$ conditional on $\bm X_t\in A$.
    By Theorem 2.2 (iii) in \citet{tsybakov2009introduction},
    \begin{align}
        \eqref{eq:mis-error}&\ge \frac{(1-24s)^d}{2}\exp\left(-\mathcal K\left(\mu^{\pi,t-1}_{f_{w_{-j},0}}\times \mu_{\bm X_t}^{A},\mu^{\pi,t-1}_{f_{w_{-j},1}}\times \mu_{\bm X_t}^{A}\right)\right)\notag\\
                            &= \frac{(1-24s)^d}{2}\exp\left(-\mathcal K\left(\mu^{\pi,t-1}_{f_{w_{-j},0}},\mu^{\pi,t-1}_{f_{w_{-j},1}}\right)-\E^{\pi,t-1}_{f_{w_{-j},0}}\left[\mathcal K\left(\mu_{\bm X_t}^{A},\mu_{\bm X_t}^{A}\right)\right]\right)\notag\\
                            &= \frac{(1-24s)^d}{2}\exp\left(-\mathcal K\left(\mu^{\pi,t-1}_{f_{w_{-j},0}},\mu^{\pi,t-1}_{f_{w_{-j},1}}\right)\right).\label{eq:lb-to-kl}
    \end{align}
    The second line follows from Lemma~\ref{lem:kl-chain}; the third line follows from the fact that
    $\mu_{\bm X_t}^A$ is the same distribution for $f_{w_{-j},0}$ and $f_{w_{-j},1}$, independent of $\mathcal F_{t-1}$.

    To further simplify the expression, note that $\mu^{\pi,t}_{f_{w_{-j},0}}(\cdot)$ can be decomposed as
    \begin{equation*}
        \mu^{\pi,t}_{f_{w_{-j},0}}(\cdot)=\mu^{\pi,t-1}_{f_{w_{-j},0}}(\cdot)\times\mu_{\bm X}(\cdot)\times \mu_{f_{w_{-j},0}}^{Z_t}(\cdot|\mathcal F_{t-1},\bm X_t),
    \end{equation*}
    where $\mu_{\bm X}$ is the measure (uniform distribution) of $\bm X_t$ and $\mu_{f_{w_{-j},0}}^{Z_t}(\cdot|\mathcal F_{t-1},\bm X_t)$ is the measure of $Z_t$ conditional on $\mathcal F_{t-1}$ and $\bm X_t$.
    We apply Lemma~\ref{lem:kl-chain} again:
    \begin{align*}
        \mathcal K\left(\mu^{\pi,t}_{f_{w_{-j},0}},\mu^{\pi,t}_{f_{w_{-j},1}}\right)&=\mathcal K\left(\mu^{\pi,t-1}_{f_{w_{-j},0}},\mu^{\pi,t-1}_{f_{w_{-j},1}}\right)+\E^{\pi,t-1}_{f_{w_{-j},0}}\left[\mathcal K(\mu_{\bm X_t},\mu_{\bm X_t})\right]\\
                                                                                    &\quad +\E^{\pi,t-1}_{f_{w_{-j},0}}\left[\E_{\bm X}\left[\mathcal K\left(\mu^{Z_t}_{f{w_{-j},0}}(\cdot|\mathcal F_{t-1},\bm X_t),\mu^{Z_t}_{f{w_{-j},1}}(\cdot|\mathcal F_{t-1},\bm X_t)\right)\right]\right].
    \end{align*}
    It is easy to see that the second term is zero.
    For the third term, we first conditional on $\mathcal F_{t-1}$ and then on the covariate $\bm X_t$.
    Because $p_t$ depends only on $\mathcal F_{t-1}$ and $\bm X_{t}$, $p_t$ is the same for $f_{w_{-j},0}$ and $f_{w_{-j},1}$ conditional on $\mathcal F_{t-1}$ and $\bm X_{t}$.
    Therefore, $\mu^{Z_t}_{f{w_{-j},0}}(\cdot|\mathcal F_{t-1},\bm X_t)$ and $\mu^{Z_t}_{f{w_{-j},1}}(\cdot|\mathcal F_{t-1},\bm X_t)$
    are two Bernoulli distributions with means $d_{w_{-j},0}(\bm X_t,p_t)$ and $d_{w_{-j},1}(\bm X_t,p_t)$, respectively.
    By Lemma~\ref{lem:normal-kl}, we have
    \begin{align*}
        \mathcal K\left(\mu^{Z_t}_{f{w_{-j},0}}(\cdot|\mathcal F_{t-1},\bm X_t),\mu^{Z_t}_{f{w_{-j},1}}(\cdot|\mathcal F_{t-1},\bm X_t)\right)&\le \frac{\left(d_{w_{-j},0}(\bm X_t,p_t)-d_{w_{-j},1}(\bm X_t,p_t)\right)^2}{d_{w_{-j},1}(\bm X_t,p_t) (1-d_{w_{-j},1}(\bm X_t,p_t))}\\
                                                                                                                                              &\le \frac{144}{11}\left(d_{w_{-j},0}(\bm X_t,p_t)-d_{w_{-j},1}(\bm X_t,p_t)\right)^2\\
                                        &= \frac{144}{11}\left(p_t\left( \frac{1}{3}- \frac{p_t}{2}\right) D(\bm X_t,\partial B_j)\I{\bm X_t\in B_j}\right)^2\\
                                        &\le  \frac{9}{11M^2}\left( \frac{2}{3}- p_t\right)^2\I{\bm X_t\in B_j}.
    \end{align*}
    In the second inequality, we used the fact that $d_{w_{-j},1}(\bm X_t,p_t)\in[ 1/12,5/6]$ as long as we choose $M\ge 2$ and thus $D(\bm X_t,\partial B_j)\le 1/2$.
    In the last inequality, we have used the fact that the distance of a vector inside $B_j$ to the boundary of $B_j$ is at most $1/2M$.
    Therefore, we can obtain an upper bound for $\mathcal K\left(\mu^{\pi,t}_{f_{w_{-j},0}},\mu^{\pi,t}_{f_{w_{-j},1}}\right)$
    \begin{align*}
        \mathcal K\left(\mu^{\pi,t}_{f_{w_{-j},0}},\mu^{\pi,t}_{f_{w_{-j},1}}\right)&\le \sum_{i=1}^t\E^{\pi,i-1}_{f_{w_{-j},0}}\left[\E_{\bm X}\left[\mathcal K\left(\mu^{Z_i}_{f{w_{-j},0}}(\cdot|\mathcal F_{i-1},\bm X_i),\mu^{Z_i}_{f{w_{-j},1}}(\cdot|\mathcal F_{i-1},\bm X_i)\right)\right]\right]\\
                                                                                    &\le \sum_{i=1}^t\E^{\pi,i-1}_{f_{w_{-j},0}}\left[\E_{\bm X}\left[ \frac{9}{11M^2}\left( \frac{2}{3}-p_t\right)^2\I{\bm X_i\in B_j}\right]\right]\\
                                                                                    &\le\sum_{t=1}^T\E^{\pi}_{f_{w_{-j},0}}\left[\frac{9}{11M^2}\left( \frac{2}{3}-p_t\right)^2\I{\bm X_t\in B_j}\right]=z_{w_{-j}}.
    \end{align*}
    Therefore, combining it with \eqref{eq:lower-bound-decomp}, \eqref{eq:mis-error} and \eqref{eq:lb-to-kl}, we have shown the lemma:
    \begin{align*}
        &\sup_{f\in\mathcal C} \sum_{t=1}^T\E\left[f^{\ast}(\bm X_t)-f(\bm X_t,p_t)\right]\\
        &\ge \frac{M_2}{2^{M^d}}\sum_{t=1}^T\sum_{j=1}^{M^d}\sum_{w_{-j}}\sum_{w_j\in\{0,1\}}\E^\pi_{f_{w_{-j},w_j}}\left[(p^{\ast}(\bm X_t)-p_t)^2\I{\bm X_t\in B_j}\right]\\
        &\ge \frac{TM_2s^2(1-24s)^2}{2^{M^d+1}M^{d+2}}\sum_{j=1}^{M^d}\sum_{w_{-j}}\exp\left(-z_{w_{-j}}\right)\\
        &= \frac{M_2T}{9\times 2^{M^d+11}M^{d+2}}\sum_{j=1}^{M^d}\sum_{w_{-j}}\exp\left(-z_{w_{-j}}\right)
    \end{align*}
    where in the last step, we have set $s=1/24$.
\end{proof}

\end{document}